\begin{document}

\title{\Large Latitude: A Model for Mixed Linear--Tropical Matrix Factorization
}
\author{Sanjar Karaev\thanks{\rule{0pt}{1.1em}Max Planck Institute for Informatics, Germany.} \\
\and
James Hook\thanks{University of Bath, United Kingdom. \newline
\hspace*{1.25em}\texttt{\{skaraev,pmiettin\}@mpi-inf.mpg.de}, \texttt{j.l.hook@bath.ac.uk}} \\
\and
Pauli Miettinen\footnotemark[1]}
\date{}

\maketitle







\begin{abstract} 
  Nonnegative matrix factorization (NMF) is one of the most frequently-used matrix factorization models in data analysis. A significant reason to the popularity of NMF is its interpretability and the `parts of whole' interpretation of its components. Recently, max-times, or subtropical, matrix factorization (SMF) has been introduced as an alternative model with equally interpretable `winner takes it all' interpretation. In this paper we propose a new mixed linear--tropical model, and a new algorithm, called Latitude, that combines NMF and SMF, being able to smoothly alternate between the two. In our model, the data is modeled using the latent factors and latent parameters that control whether the factors are interpreted as NMF or SMF features, or their mixtures. We present an algorithm for our novel matrix factorization. Our experiments show that our algorithm improves over both baselines, and can yield interpretable results that reveal more of the latent structure than either NMF or SMF alone.

\bigskip
\noindent
\textbf{Keywords:} matrix factorization; subtropical algebra; NMF


\end{abstract}


\section{Introduction}
\label{sec:introduction}
Matrix factorizations are a popular method for extracting latent structure from the data. Different factorizations find different types of structure. For example, singular value decomposition (SVD) and principal component analysis (PCA) can be used to find the directions of the greatest variance in the data. In other cases, we might want to decompose the matrix into nonnegative components to gain so-called ``parts-of-whole'' interpretation. For that, we would use some nonnegative matrix factorization (NMF) algorithm. Or perhaps, instead of taking the sum of the nonnegative components, we are only interested on the largest elements to gain ``winner-takes-it-all'' interpretation; for that, we would use subtropical matrix factorizations (SMF).
Matrix factorizations are global models, meaning that they apply their structures, be that SVD, NMF, or something else, to the whole matrix. But it is not clear that any data is only a result of a single model. Indeed, it can be that parts of the data are formed using a sum of the rank-1 components, while another part is formed by taking the element-wise maximums. Consider, for example, the classical example of movie ratings data, that is, \by*{people}{movies} matrix containing the movies' ratings. It is often assumed that these ratings have a latent low-rank structure, that there exists some $k$ factors that dictate how much different people like different movies, and that the users' final rating is a linear combination of these factors. For example, Alice might like some movie because she likes the director, the lead actor, and the genre, though she's less keen on the supporting actress. But it is equally plausible that some factor is so dominant, that the rating is dictated by that factor alone. For example, Bob might like all Star Wars movies simply because they are Star Wars movies, and completely irrespective of their director, actors and actresses, or other factors. In this situation, taking the largest value instead of the sum would be a better model.
In this paper we present a \emph{mixed linear--tropical model} that allows us to mix NMF and subtropical matrix decompositions. This provides us much more accurate decompositions than what we can achieve using either NMF or SMF -- or even SVD -- alone (see Section~\ref{sec:experiments}). That we can improve over the base models, NMF and SMF, indicates that our hypothesis of the data being of mixed structure is correct.
In addition to giving a better reconstruction error, our model is also highly interpretable, and uncovers interesting novel structure from the data. Namely, we can study which elements are more NMF-style and which are more SMF-style.
Our algorithm for finding the mixed linear--tropical structure is called \Latitude, as it varies smoothly between the tropic (SMF) and pole (NMF). \Latitude can be used to decompose relatively large data sets, as it scales linearly with the input data.
\paragraph{Main contributions.}
In this paper we present a novel matrix factorization model, called mixed linear--tropical model (Section~\ref{sec:model}) and a scalable algorithm for finding a decomposition in this model (Section~\ref{sec:algorithms}). Our experiments (Section~\ref{sec:experiments}) show that our algorithm finds decompositions that have smaller reconstruction error than what NMF or SMF methods -- or even SVD -- can find, and that the results are also intuitive and reveal interesting structures from the data sets. 

\section{Notation and Basic Definitions}
\label{sec:notation}

In this section we present the basic notation and briefly recall NMF and SMF. We postpone the discussion of the related work to Section~\ref{sec:related-work}.

\paragraph{Basic notation.}

Throughout this paper we will denote the $i$th row of matrix $\mA$ with $\mA_i$ and the $j$th column with $\mA^j$. Element $(i, j)$ of $\mA$ is $\mA_{ij}$. We use $\rg$ to denote the nonnegative real numbers $[0, \infty)$, and $\N$ to denote the natural numbers $\{1, 2, \ldots\}$. The Frobenius norm of a matrix $\mA$ is denoted by $\norm{\mA}_F = \bigl(\sum_{ij}\mA_{ij}^2\bigr)^{1/2}$.

\paragraph{Nonnegative matrix factorization.}

In nonnegative matrix factorization (NMF), we are given a nonnegative matrix $\mA\in\rgnm$ and target rank $k$, and our task is to find nonnegative factor matrices $\mB\in\rgnk$ and $\mC\in\rgkm$ that minimize $\norm{\mA - \mB\mC}_F$. Alternatively, we can write $\mB\mC = \sum_{i=1}^k \mB^i\mC_i$, where each $\mB^i\mC_j$ is a nonnegative rank-1 component matrix. Each component matrix contributes a nonnegative part to the total sum, and it is standard to interpret these rank-1 components as ``parts of a whole.'' Over the years, many different algorithms have been proposed to solve NMF, with methods based on alternating least squares optimization or multiplicative update rules being the most prominent ones (see \cite{cichocki09nonnegative} for a comprehensive treatise).

\paragraph{Subtropical matrix factorization.}

Subtropical matrix factorization (SMF) is similar to NMF, but it replaces the sum with the maximum in the component formulation: $\mB\maxprod\mC = \max_{i=1}^k\{\mB^i\mC_i\}$, where the maximum is taken element-wise. Equivalently, SMF is a matrix factorization over the \emph{subtropical (or max-times) semi-ring}, that is, values $\rg$ endowed with the addition operation $\max$ and the multiplication operation $\times$ (i.e.\ the standard multiplication).  To recap, in SMF, we are given a nonnegative matrix $\mA\in\rgnm$ and target rank $k$, and our task is to find nonnegative factor matrices $\mB\in\rgnk$ and $\mC\in\rgkm$ that minimize $\norm{\mA - \mB\maxprod\mC}_F = \norm{\mA - \max_{i=1}^k\{\mB^i\mC_i\}}_F$.

Since only the element-wise largest element has effect to the final product, SMF is said to exhibit the ``winner-takes-it-all'' structure.
This tends to yield sparser factor matrices \cite{karaev16capricorn,karaev16cancer}. Note also that $(\mB\maxprod\mC)_{ij} \leq (\mB\mC)_{ij}$ for all $i$ and $j$. Since SMF is taken over the subtropical semiring, it is possible that the factorization obtains smaller reconstruction error than SVD.

It should be noted that the concept of a rank-1 matrix in NMF and SMF coincide, even though rank-$k$ decompositions are generally different. This is a key feature for our model. In \emph{tropical algebra} the summation operation is $\max$ and the multiplication is $+$. Hence, matrix $\mA$ has \emph{tropical rank-1} if there exists vectors $\va$ and $\vb$ such that $\mA_{ij} = \va_i + \vb_j$. For more on tropical algebra, see Section~\ref{sec:related-work} and references therein.


\section{The Mixed Linear--Tropical Model}
\label{sec:model}

Rather than describing the data using NMF or subtropical matrix factorization, we propose a hybrid model that incorporates them both and allows for a smooth transition between the two. Ideally, given an input matrix $\mA \in \rgnm$, we want to be able to determine what elements $\mA_{ij}$ are better represented using the standard algebra, and which ones require the subtropical one. Namely, we seek factor matrices $\mB \in \rgnk$ and $\mC \in \rgkm$ and parameters $\mAlpha \in \rnm$ such that
\begin{equation} \label{mix:prod:general}
\mA_{ij} \approx f(\mAlpha_{ij}) (\mB \maxprod \mC) + g(\mAlpha_{ij}) (\mB \mC)_{ij} \; ,
\end{equation}
for some functions $f$ and $g$ that we will define below.
By representing $\mA$ as a ``mixture'' of the normal and subtropical products of the factor matrices, we allow for more flexibility in fitting the data. Since by altering the parameter matrix $\mAlpha$ we can choose different mixing coefficients for different elements of $\mA$, it is possible to better explain data that has piecewise NMF and piecewise subtropical structure. Moreover, since the functions $f(\mAlpha_{ij})$ and $g(\mAlpha_{ij})$ don't have to be restricted to binary values, we can also express some elements $\mA_{ij}$ as a weighted sum of $(\mB \maxprod \mC)_{ij}$ and $(\mB \mC)_{ij}$.

It is important to note that the equation \eqref{mix:prod:general} is quite general, and unless we impose restrictions on functions $f$ and $g$, as well as the matrix $\mAlpha$,  our model will overfit the data. When it comes to choosing the proper functions $f$ and $g$, there is a trade-off between fitting the data and keeping the model simple. In this paper we use the convex combination $f(\mAlpha_{ij}) = \mAlpha_{ij}, g(\mAlpha_{ij}) = 1 - \mAlpha_{ij},  \mAlpha_{ij} \in [0, 1]$, which is very simple, and at the same time provides an intuitive transition from the standard product at $\mAlpha_{ij}=0$ to the subtropical product at $\mAlpha_{ij}=1$. We obtain
\begin{equation} \label{mix:prod:convex}
  \mA_{ij} \approx \mAlpha_{ij} (\mB \maxprod \mC) + (1 - \mAlpha_{ij}) (\mB \mC)_{ij}
\end{equation}
for $\mAlpha_{ij} \in [0, 1]$.

When choosing $\mAlpha$ we are faced with a similar trade-off. Indeed, if allow $\mAlpha$ be unconstrained, we can fit arbitrarily complex matrices with \emph{constant} factor matrices, as the following proposition illustrates.

\begin{proposition}
  \label{prop:constant_factors}
  Let $\mA \in [1, 2]^{n\times m}$ and let $k=4$. There exists $\mAlpha\in[0,1]^{n\times m}$, $\mB\in\rgnk$, and $\mC\in\rgkm$ such that all entries of $\mB$ and $\mC$ are the same and that $\mA_{ij} = \mAlpha_{ij} (\mB \maxprod \mC) + (1 - \mAlpha_{ij}) (\mB \mC)_{ij}$ for all $i=1,\ldots, n$ and $j=1,\ldots, m$.
\end{proposition}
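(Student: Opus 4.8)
The plan is to exploit the fact that constant factor matrices produce constant standard and subtropical products, and that the convex combination in \eqref{mix:prod:convex} then interpolates between these two constants on an element-by-element basis. Concretely, I would take $\mB \in \rgnk$ to be the all-ones matrix and $\mC \in \rgkm$ to be the all-ones matrix, so that every entry of $\mB$ and of $\mC$ equals $1$; this trivially satisfies the requirement that all entries of each factor be the same. All that then remains is to choose $\mAlpha$.

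First I would compute the two products explicitly. Since each entry of $\mB$ and $\mC$ is $1$ and $k=4$, the standard product satisfies $(\mB\mC)_{ij} = \sum_{l=1}^{4} 1\cdot 1 = 4$ for every $(i,j)$, while the subtropical product satisfies $(\mB \maxprod \mC)_{ij} = \max_{l=1}^{4}\{1\cdot 1\} = 1$ for every $(i,j)$. Substituting these into the right-hand side of \eqref{mix:prod:convex} gives
\begin{equation*}
\mAlpha_{ij}(\mB \maxprod \mC)_{ij} + (1-\mAlpha_{ij})(\mB\mC)_{ij} = \mAlpha_{ij} + 4(1-\mAlpha_{ij}) = 4 - 3\mAlpha_{ij}\,.
\end{equation*}

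Next I would solve for $\mAlpha$ entrywise. Setting $4 - 3\mAlpha_{ij} = \mA_{ij}$ yields $\mAlpha_{ij} = (4 - \mA_{ij})/3$, which reproduces $\mA_{ij}$ exactly. The only point requiring verification is that this choice lands in the admissible range $[0,1]$: since $\mA_{ij} \in [1,2]$, we obtain $\mAlpha_{ij} \in [2/3, 1] \subseteq [0,1]$, so indeed $\mAlpha \in [0,1]^{n\times m}$ as required, and the claimed factorization holds.

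I do not expect any genuine obstacle here; the proposition is an existence claim whose real content is that constant factors already span a wide reconstruction range once $\mAlpha$ is free. The one thing to be careful about is the feasibility window: as $\mAlpha_{ij}$ sweeps $[0,1]$, the reconstructed value sweeps the interval $[(\mB\maxprod\mC)_{ij},\,(\mB\mC)_{ij}] = [1,4]$, and one must confirm that the target band $[1,2]$ lies inside this interval. It does, and in fact any $k\geq 2$ would suffice for the same construction (the upper endpoint becomes $k$); the value $k=4$ merely provides extra slack. This is precisely the degeneracy that the authors invoke to motivate constraining $\mAlpha$.
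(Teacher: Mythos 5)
Your proof is correct and follows essentially the same route as the paper: choose constant factor matrices so that $(\mB\maxprod\mC)_{ij}$ and $(\mB\mC)_{ij}$ become two distinct constants bracketing $[1,2]$, then solve the convex-combination equation entrywise for $\mAlpha_{ij}$ and verify it lies in $[0,1]$. The only difference is cosmetic -- the paper uses entries $\sqrt{3}/2$ (giving products $3/4$ and $3$) while you use entries $1$ (giving products $1$ and $4$) -- and your observation that any $k\ge 2$ suffices is a valid minor strengthening.
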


\begin{proof}
  Let all entries of $\mB$ and $\mC$ be $\sqrt{3}/2$. Then for any $1 \le i \le n$ and $1 \le j \le m$ we have $(\mB \maxprod \mC)_{ij} = 3/4$ and $(\mB \mC)_{ij} = 3$ . Now if we set 
  \begin{equation} \label{alpha:perfect:fit}
    \mAlpha_{ij} = \frac{\mA_{ij} - (\mB \mC)_{ij}} {(\mB \maxprod \mC)_{ij} - (\mB \mC)_{ij}} \;,
  \end{equation}
  then $0 \le \mAlpha_{ij} \le 1$ holds. By plugging \eqref{alpha:perfect:fit} into \eqref{mix:prod:convex}, we obtain $\mAlpha_{ij} (\mB \maxprod \mC)_{ij} + (1 - \mAlpha_{ij}) (\mB \mC)_{ij}  = \mA_{ij}$, concluding the proof.
\end{proof}

Being able to decompose essentially arbitrary matrix into constant factor matrices shows that unrestricted $\mAlpha$ can have too much power. To constrain $\mAlpha$, we enforce it to have essentially a tropical rank-1 structure: 
  \begin{equation} \label{alpha:sigmoid}
    \mAlpha_{ij} = \sigma(\co_i + \ro_j) \;,
  \end{equation}
  where $\co \in \rgc$ and $\ro \in \rgr$ are arbitrary vectors, and $\sigma(x) = 1/(1 + e^{-x})$ is the sigmoid function. 

Now, given the factors $\mB \in \rgnk$, $\mC \in \rgkm$ and the parameter vectors $\co \in \rgc$, $\ro \in \rgr$, we can define their mixed linear--tropical product, $\defprod$, elementwise as follows
  \begin{equation} \label{mixprod}
    (\defprod)_{ij} =  \mAlpha_{ij} (\mB \maxprod \mC)_{ij} + (1 - \mAlpha_{ij}) (\mB \mC)_{ij} \;,
  \end{equation}
  where  $\mAlpha_{ij} = \sigma(\co_i + \ro_j)$.

  It is trivial to see that when elements in both $\co$ and $\ro$ tend to $-\infty$, we have $\mB \mixprod_{\co, \ro} \mC \to \mB \mC$. The greater the element $\mAlpha_{ij} = \sigma(\co_i + \ro_j)$ is, the closer the corresponding element in the mixed product is to the subtropical product. In the limit, when all elements of $\mAlpha$ tend to $\infty$, we have $\mB \mixprod_{\co, \ro} \mC \to \mB \maxprod \mC$.

  We can interpret the values in parameter vectors $\co$ and $\ro$ to give the ``typical'' level of NMF or subtropical structure associated with the corresponding rows and columns. If, for example, $\co_i \ll 0$, it means that row $i$ has strong NMF-type structure, while $\co_i \gg 0$ would mean strongly subtropical structure. If $\co_i\approx 0$, then the structure is an even mixture of the two. Similarly, if $\co_i + \ro_j \gg 0$, then the element $\mA_{ij}$ has subtropical structure, and vice versa for $\co_i + \ro_j \ll 0$. This interpretation also explains why we use the tropical rank-1 model, that is, summation, instead of the standard rank-1 model $\co\ro^T$: if we calculate the product, we cannot interpret negative values of $\co_i$ or $\ro_j$ as indicative of ``typically NMF'' structure, as if both $\co_i,\ro_j < 0$, then $\co_i\ro_j > 0$, indicating subtropical structure. 
  

  Now we can define the main problem considered in this paper.
\begin{problem} \label{main:problem}
  Given an input matrix $\mA \in \rgnm$ and an integer $k>0$, find two factor matrices $\mB \in \rgnk$ and $\mC \in \rgkm$ and parameter vectors $\co \in \rgc$ and $\ro \in \rgr$ such that
  \begin{equation} \label{main:obj}
     \E(\mA, \mB, \mC, \co, \ro) = \|\mA - \defprod\|_F
   \end{equation}
   is minimized.
 \end{problem}

 Unfortunately it seems that the optimization of the above problem is hard:

 \begin{proposition}
   \label{prop:np-hardness}
   Given $\mA\in\rgnm$, $k$, $\co$, and $\ro$, finding $\mB\in\rgnk$ and $\mC\in\rgkm$ that minimize $\E(\mA, \mB, \mC, \co, \ro)$ is \NP-hard. It is also \NP-hard to find $\mB\in\rgnk$ and $\mC\in\rgkm$ that approximate $\E(\mA, \mB, \mC, \co, \ro)$ to within any polynomially computable factor.
 \end{proposition}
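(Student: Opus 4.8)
The plan is to establish both claims at once by reducing to the exact-factorization decision problem, namely deciding whether there exist $\mB\in\rgnk$ and $\mC\in\rgkm$ with $\defprod = \mA$ (equivalently $\E(\mA,\mB,\mC,\co,\ro)=0$). First I would observe that any algorithm approximating $\E$ within a polynomially computable factor $\rho$ must return a solution of value at most $\rho\cdot\mathrm{OPT}$; when $\mathrm{OPT}=0$ this forces the returned value to be exactly $0$. Hence a polynomial-time approximation within any finite factor would decide the exact-factorization problem in polynomial time, and likewise an exact minimizer would reveal whether $\mathrm{OPT}=0$. Both \NP-hardness claims therefore follow once I show that deciding $\mathrm{OPT}=0$ is \NP-hard, and this is the statement I would actually prove.

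Next I would reduce from a known \NP-hard factorization problem whose hardness is governed by the zero/support pattern, the natural candidate being Boolean (subtropical) matrix factorization, i.e.\ the set-basis problem. The key structural fact I would exploit is that the mixing weights $\mAlpha_{ij}=\sigma(\co_i+\ro_j)$ lie strictly in $(0,1)$ for finite $\co,\ro$: since $(\mB\maxprod\mC)_{ij}$ and $(\mB\mC)_{ij}$ are both nonnegative, an entry $\mA_{ij}=0$ forces $\mAlpha_{ij}(\mB\maxprod\mC)_{ij}+(1-\mAlpha_{ij})(\mB\mC)_{ij}=0$, hence every product $\mB_{il}\mC_{lj}$ vanishes. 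Thus the zero pattern of $\mA$ constrains the supports of $\mB$ and $\mC$ exactly as in a Boolean product: $\operatorname{supp}(\mB\mC)$ is the Boolean product of $\operatorname{supp}(\mB)$ and $\operatorname{supp}(\mC)$, so matching the zero pattern of a $0/1$ target with inner dimension $k$ is equivalent to covering it by $k$ bicliques. For concreteness I would fix $\co=\ro=\mathbf{0}$ (so $\mAlpha\equiv\tfrac12$) and take the target value on the support to be a common positive constant.

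The main obstacle is that, unlike pure NMF or pure SMF, the objective is a strict convex combination, so on the positive entries the maximum and the sum are coupled through the shared factor entries and cannot be matched independently. I expect this to be the crux of the argument. To handle it I would engineer the instance---using a common target value together with a small number of gadget rows and columns that pin the scale of each factor column and row---so that any exact solution is forced to use pairwise disjoint bicliques; once the supports are disjoint we have $(\mB\maxprod\mC)_{ij}=(\mB\mC)_{ij}$ on every entry, the convex combination collapses to a single product, and the magnitudes can be set to hit the target value. This makes exact solvability equivalent to \emph{partitioning} (rather than merely covering) the support into $k$ bicliques, and I would reduce from the corresponding \NP-complete biclique-partition problem. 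An alternative route is to invoke the limits noted after \eqref{mixprod}: driving $\co,\ro$ to $+\infty$ (resp.\ $-\infty$) makes $\defprod$ converge to $\mB\maxprod\mC$ (resp.\ $\mB\mC$), so the problem contains SMF (resp.\ NMF) as a limiting special case; turning this containment into a reduction with finite, polynomially bounded $\co,\ro$ again requires controlling the residual contribution of the non-dominant product, which is precisely the coupling difficulty above.
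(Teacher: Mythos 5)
Your first two steps are fine: the zero-gap argument (any finite-factor approximation must return $0$ when the optimum is $0$, so both claims reduce to \NP-hardness of deciding whether $\E=0$) is standard, and the observation that $\mAlpha_{ij}=\sigma(\co_i+\ro_j)\in(0,1)$ forces the support of $\defprod$ to equal the Boolean product of the supports of $\mB$ and $\mC$ is correct. The gap is in the core reduction itself. With $\co=\ro=\mathbf{0}$ (so $\mAlpha\equiv\tfrac12$) and a target that is a positive constant on its support, exact solvability is \emph{not} equivalent to partitioning the support into at most $k$ bicliques, and it fails in exactly the direction you need: an exact solution does not have to induce disjoint bicliques. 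For example, if an entry $(i,j)$ is covered by exactly two blocks, each contributing the value $2/3$ there (realizable with rank-one consistency, e.g.\ $\mB_{il}=2/3$ and $\mC_{lj}=1$ on block $l$'s support), the mixed value is $\tfrac12\cdot\tfrac23+\tfrac12\cdot\tfrac43=1$, an exact fit with overlapping supports; more generally a $t$-fold overlap with common value $2/(t+1)$ fits exactly. So exact fit sits strictly between biclique cover and biclique partition, and the implication ``exact solution $\Rightarrow$ partition'' is false without further constraints. The gadget rows and columns that are supposed to force pairwise disjointness are precisely the crux, and you assert their existence rather than construct them; as written, the proposal is an outline whose central step is missing.

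It is also worth comparing with what the paper actually does: it gives no reduction of this kind at all, but derives the proposition as a direct consequence of the known \NP-hardness and inapproximability of NMF \cite{vavasis09complexity} and of SMF \cite{karaev16capricorn}, i.e.\ by treating these as (limiting) special cases of the mixed problem. Your closing remark---that turning the limits $\co,\ro\to\pm\infty$ into a reduction with finite, polynomially bounded parameters requires controlling the residual contribution of the non-dominant product---identifies exactly the step that the paper's one-line proof elides; your ``alternative route'' is essentially the paper's argument plus an honest acknowledgment of its missing quantitative component. So neither your primary route (without the gadget construction) nor the limit route (without a perturbation/gap argument) is complete as stated; supplying either would go beyond the level of detail the paper itself provides.
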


 The proposition is a direct consequence of the \NP-hardness of computing or approximating NMF~\cite{vavasis09complexity} or subtropical matrix factorization~\cite{karaev16capricorn}.

\section{The Algorithm}
\label{sec:algorithms}

  The algorithm \Latitude (Algorithm~\ref{alg:latitude}) finds a mixed linear--tropical matrix decomposition of the given input data.\footnote{Code is available at \url{https://people.mpi-inf.mpg.de/~pmiettin/linear-tropical/}} As input it accepts the data matrix $\mA \in \rgnm$, the rank of the sought decomposition $k\in\N$, and an integer parameter $\niter\in\N$, that determines the number of iterations of the algorithm. It returns the computed factors $\mB \in \rgnk$ and $\mC \in \rgkm$ and parameter vectors $\co \in \rgc$ and $\ro \in \rgc$. \Latitude has also one parameter, $M\in\rg$. 
  Each element in $\co$ and $\ro$ must belong to the $[-M, M]$ interval. However, in practice very high values in the parameter vectors do not make sense due to the use of the sigmoid function (see \eqref{alpha:sigmoid}) -- they would get ``smoothed out'' and make only marginal changes to the parameter matrix $\mAlpha$. For this reason for all experiments in this paper we used $M=5$, at which point $\sigma(M)=0.9933$, and there is almost nothing to be gained by increasing $M$ further.
  
   \begin{algorithm}[tbp]
   \flushleft\small%
   \caption{\Latitude}\label{alg:latitude}
   \begin{algorithmic}[1]
     \Statex \textbf{Input:} $\mA \in \rgnm$, $k\in\N$, $\niter \in\N$ 
     \Statex \textbf{Output:} $\Bbest \in \rgnk$, $\Cbest \in \rgkm$, $\cobest \in \rgc$, $\robest \in \rgr$ 
     \Statex \textbf{Parameters:} $\maxparam$ \Comment{The maximum possible value of parameter vectors. In practice 5 is a good choice}
     \Function{\Latitude}{$\mA$, $k$, $\niter$}
      \State initialize $\mB$ and $\mC$ \label{init}
      \State $\mD \gets \mB \mC - \mA$
      \State $\vf_i \gets \sum_{j=1}^m{\mD_{ij}}, \vg_j \gets \sum_{i=1}^n{\mD_{ij}}$
      \State $\vs_i \gets$ \text{index of the} $i$\text{-th smallest element of} $\vf$
      \State $\vt_j \gets$ \text{index of the} $j$\text{-th smallest element of} $\vg$
      \State $\co_i \gets \frac{i-n}{n-1}\maxparam$
      \State $\ro_j \gets \frac{j-m}{m-1}\maxparam$
      \State $\Bbest \gets \matr{B}, \Cbest \gets \matr{C}$ \Comment{Initialize best factors.}
      \State $\cobest \gets \co, \robest \gets \ro$ \Comment{Initialize best parameters.}
       \State $\bestError \gets \deferror$
       \For{$\iter \gets 1$ \textbf{to} $\niter$} \label{outer:loop}
       \For{$j \gets 1$ \textbf{to} $m$} 
       \State $[\mC^j, \ro_j] \gets \SolveMixRegression(\mA^j, \mB, \mC^j, \co, \ro_j, M)$ \label{update:C}
       \EndFor 
       \For{$i \gets 1$ \textbf{to} $n$} 
       \State $[\mB_i, \co_i] \gets \SolveMixRegression(\mA_i^T, \mC^T, \mB_i^T, \ro, \co_i, M)$ \label{update:B}
        \EndFor 
        \If{$\deferror < \bestError$} \label{check:improvement}
        \State $\Bbest \gets \matr{B}, \Cbest \gets \matr{C}$ \label{update:best:factors}
        \State $\cobest \gets \co, \robest \gets \ro$ 
        \State $\bestError \gets \deferror$ \label{update:best:error}
          \EndIf       
       \EndFor \label{outer:loop:end}
       \State \textbf{return} $\Bbest$,  $\Cbest$, $\cobest$, $\robest$
     \EndFunction
   \end{algorithmic}
 \end{algorithm}

The main idea of \Latitude is to repeatedly use a routine that solves the linear--tropical regression problem to alternatingly update  the factor matrices and the parameter vectors. Namely, when the factor matrix $\mB$ and the parameter vector $\co$ are fixed, finding the other factor matrix $\mC$ and parameter vector $\ro$ reduces to solving the problem
 \begin{equation} \label{mix:reg}
[\mC^j, \ro_j] \gets \argmin_{\vc \in \rgk,\, s \in [-M, M]} \|\mA^j - \mB \mixprod_{\co, s} \vc\|_F
 \end{equation}
 $m$ times (once per column of $\mC$). Then we fix $\mC$ and $\ro$ and do the same for $\mB$ and $\co$. This process is repeated $M$ times. The algorithm starts by initializing the factor matrices $\mB$ and $\mC$  (line~\ref{init}). This can be done by using random matrices, or, for example, by using some NMF algorithm. Starting with a ``pure'' NMF solution gives us a reasonable initial solution, and we use that initialization in our experiments. The updates to the factors and parameters are done inside the main loop (lines~\ref{outer:loop}--\ref{outer:loop:end}), where line~\ref{update:C}  updates $\mC$ and $\ro$, and line~\ref{update:B} updates $\mB$ and $\co$. On each iteration we check if the current solution $\mB$, $\mC$, $\co$, $\ro$ improves on the best one found before that (line~\ref{check:improvement}), and if it does, then we update the best solution and the best error (lines~\ref{update:best:factors}--\ref{update:best:error}).

 The function \SolveMixRegression (Algorithm~\ref{alg:solvemixregression}) solves problem \eqref{mix:reg}, and is where the actual updates to the factors and parameters are performed. It takes as input vector $\va \in \rgn$, the first factor matrix $\mB \in \rgnk$, an initial solution for the output vector $\vc \in \rgk$, the column parameter vector $\co$, the starting value for the row parameter element $t$, and the number $M>0$ that defines the range of the values in the parameter vectors. It returns the updated versions of the vector $\vc$ and the element $t$. Finding the global minimum of \eqref{mix:reg} with respect to both $\vc$ and $t$ is hard, and hence we update them separately. In fact, even when the parameter $t$ is fixed, optimizing \eqref{mix:reg} with respect to $\vc$ is problematic. To see that, let us first rewrite \eqref{mix:reg} for a fixed value of $t$. It becomes
 \begin{equation} \label{reg:fixed}
   \argmin_{\vc \in \rgk} \|\va - (\sigma(\co+t) \mB \maxprod \mC + (1-\sigma(\co+t)) \mB \vc)\|_F \;.
 \end{equation}
 
 For every $1 \le i \le n$ denote by $\varphi(i, \vc)$ the index of the largest element in the vector $\mB_i \Hadamard \vc^T$, where $\Hadamard$ is the element-wise (Hadamard) product. We have
 \begin{equation}
   \label{eq:reg_transformed}
   \begin{split}
    &\quad\; \mAlpha_i \mB_i \maxprod \vc + (1 - \mAlpha_i) \mB_i \vc \\
    &= \mAlpha_i \max_s\lbrace \mB_{is}\vc_s\rbrace + (1 - \mAlpha_i) \sum_{s}\mB_{is}\vc_s   \\
    &= \mB_{i\varphi(i, \vc)}\vc_{\varphi(i, \vc)} + (1 - \mAlpha_i) \sum_{s \ne \varphi(i, \vc)}\mB_{is}\vc_s \;,
   \end{split}
 \end{equation}
and hence the problem \eqref{reg:fixed} is transformed into
 \begin{equation} \label{almost:reg}
\argmin_{\vc \in \rgk} \|\va - \mY(\vc) \vc\|_F , \;
  \mY(\vc)_{ij} = \begin{cases}
           1 & j = \varphi(i, \vc)   \\
           1-\mAlpha_i &  \text{otherwise \;.}
         \end{cases}
 \end{equation}

    \begin{algorithm}[tbp]
   \flushleft\small%
   \caption{\SolveMixRegression}\label{alg:solvemixregression}
   \begin{algorithmic}[1]
     \Statex \textbf{Input:} $\va \in \rgn$, $\mB \in \rgnk$, $\vc \in \rgk$, $\co \in \rgc$, $t \in \R$, $M>0$ 
     \Statex  \textbf{Output:} $\vc \in \rgk$, $t \in \R$  
     \Function{\SolveMixRegression}{$\va$, $\mB$, $\vc$, $\co$, $t$, $M$}
     \State $\mX_i \gets \mB_i \Hadamard \vc^T$ \label{getX}
     \State $\mAlpha \gets \sigma(\co + t)$ 
     \State $\mT_{ij} \gets          \begin{cases}
           1 & j = \argmax_{1 \le s \le k} \mX_{is}   \\
           1-\mAlpha_i &  \text{otherwise}
         \end{cases}$
         \State $\mY \gets \mB \Hadamard \mT$ \label{getY}
         \State $\vc \gets \argmin_{\vp \in \rgk} \|\va - \mB \vp\|_F$ \label{getC}
         \State $t \gets \argmin_{s \in [-M, M]} \|\va - \mB \mixprod_{\co, s} \vc\|_F$ \label{get:param}
       \State \textbf{return} $\vc$,  $t$
     \EndFunction
   \end{algorithmic}
 \end{algorithm}
 
If the coefficient matrix $\mY(\vc)$ did not depend on $\vc$, \eqref{almost:reg} would become a standard nonnegative linear regression problem. Unfortunately, the dependence of $\varphi(i, \vc)$ on $\vc$ is very complex, and hence it is hard to solve \eqref{almost:reg} directly. In order to overcome this obstacle, we use another heuristic, that is we fix the coefficient matrix $\mY(\vc)$, and assume it to be independent from $\vc$. Under these assumptions $\vc$ can be found using a standard nonnegative linear regression algorithm. We use the MATLAB built-in \lsqnonneg. The matrix $\mY$ is built on lines~\ref{getX}--\ref{getY}, and the vector $\vc$ is found on line~\ref{getC}. Finally, on line~\ref{get:param} we update the parameter $t$. This is done using the binary search on the interval $[-M, M]$ for the point where the derivative with respect to $t$ is close to 0.

\paragraph{Time complexity. }

    Running \Latitude comprises of executing \NMF to initialize the factors, and then repeatedly updating them, as well as the parameters, using the \SolveMixRegression routine. For each $i=1,\ldots,n$ and $j=1,\ldots,m$, \SolveMixRegression is called $\niter$ times. In order to estimate the complexity of \SolveMixRegression it suffices to consider the case when it is called to update $\mC$ and $\ro$ as the alternate case is analogous (one just needs to replace $n$ by $m$). Computing the matrix $\mY$  (lines~\ref{getX}--\ref{getY}) and finding $t$ (line~\ref{get:param}) take time $O(nk)$ each; the latter one because it is enough to make a finite number of steps of the binary search. Thus, if we denote by $\Gamma(n, k)$ the complexity of solving the nonnegative linear regression problem, then the running time of \SolveMixRegression would be given by  $O(nk) + \Gamma(n, k)$. Since we use \NMF to initialize the factors, the runtime of \Latitude depends on what NMF algorithm is called. If we denote the complexity of \NMF by $\Pi(n, m, k)$, then the total complexity of \Latitude is $\niter m(O(nk) + \Gamma(n, k)) + \niter n(O(mk) + \Gamma(m, k)) + \Pi(n, m, k) = O(\niter nmk) + \niter m\Gamma(n, k) + \niter n\Gamma(m, k) + \Pi(n, m ,k)$.

    Using \texttt{lsqnonneg} for the nonnegative regression and denoting its average number of iterations by $\ell$ as above, we have that $\Gamma(n, k) = O(\ell nk^2)$. Using projected ALS algorithm~\cite{cichocki09nonnegative} for the NMF, each iteration takes $O(nk^2+mk^2+nmk)$ time, and we denote the expected number of iterations of the \NMF algorithm by $t$. With these choices, the total time complexity becomes $O\bigl(\niter k(nm + lnmk) + tk(nk+mk+nm)\bigr)$. Importantly, this is linear in the dimensions of the input matrix.

    For actual runtime on various real-world dataset see Appendix~\ref{app:runtime}.


\section{Experimental Evaluation}
\label{sec:experiments}
In this section we test \Latitude on both synthetic and real-world data, in order to verify how well it can recover the mixed tropical-linear structure. We also compare it against various benchmark matrix factorization methods.
\subsection{Other methods.}
\label{sec:exp:other-methods}
Since \Latitude is designed to work with data that has a mixture of NMF and SMF structures, it is important to compare against algorithms that target them both. There is a multitude of NMF algorithms, but in this paper we use MATLAB's default implementation \nnmf, to which we will refer simply as \NMF. We will also compare against \SVD in order to get a comparison to optimal rank-$k$ decomposition.
Unlike \NMF or \SVD, the subtropical matrix factorization is a quite new direction of research, and to the best of our knowledge there are only two available algorithms: \Cancer \cite{karaev16cancer} and \Capricorn \cite{karaev16capricorn}. Of these, \Cancer is more suitable due to its ability to handle Gaussian noise, and hence we chose it over \Capricorn.
\subsection{Synthetic Experiments.}
\label{sec:synth-exper}
The purpose of the synthetic experiments is to verify that the proposed algorithms are actually capable of recovering the sought structure when the data conforms to the mixed tropical-linear model. First we  generate the data using the mixed tropical-linear structure, then add some Gaussian noise, and finally run the methods to see how much structure they can recover. Unless stated otherwise, the matrices are of size $1000\times800$ with true rank 10 and values drawn uniformly at random from the $[0, 1]$ interval. The factor density is by default 20\%, and the standard deviation of the Gaussian noise is 0.01. In order to make sure that after applying the noise the data remains nonnegative, we truncate all values below 0. The parameter vectors $\co$ and $\ro$ are drawn uniformly at random from the $[-5, 5]$ interval. For the pure subtropical and NMF structure experiments we did not use parameters, but rather multiplied the factors directly. The reconstruction error is always measured against the original, noise-free matrix.
\begin{figure*}[t]
  \centering
  \subfloat[Varying noise with pure subtropical data.]{%
    \includegraphics[width=\subfigwidth]{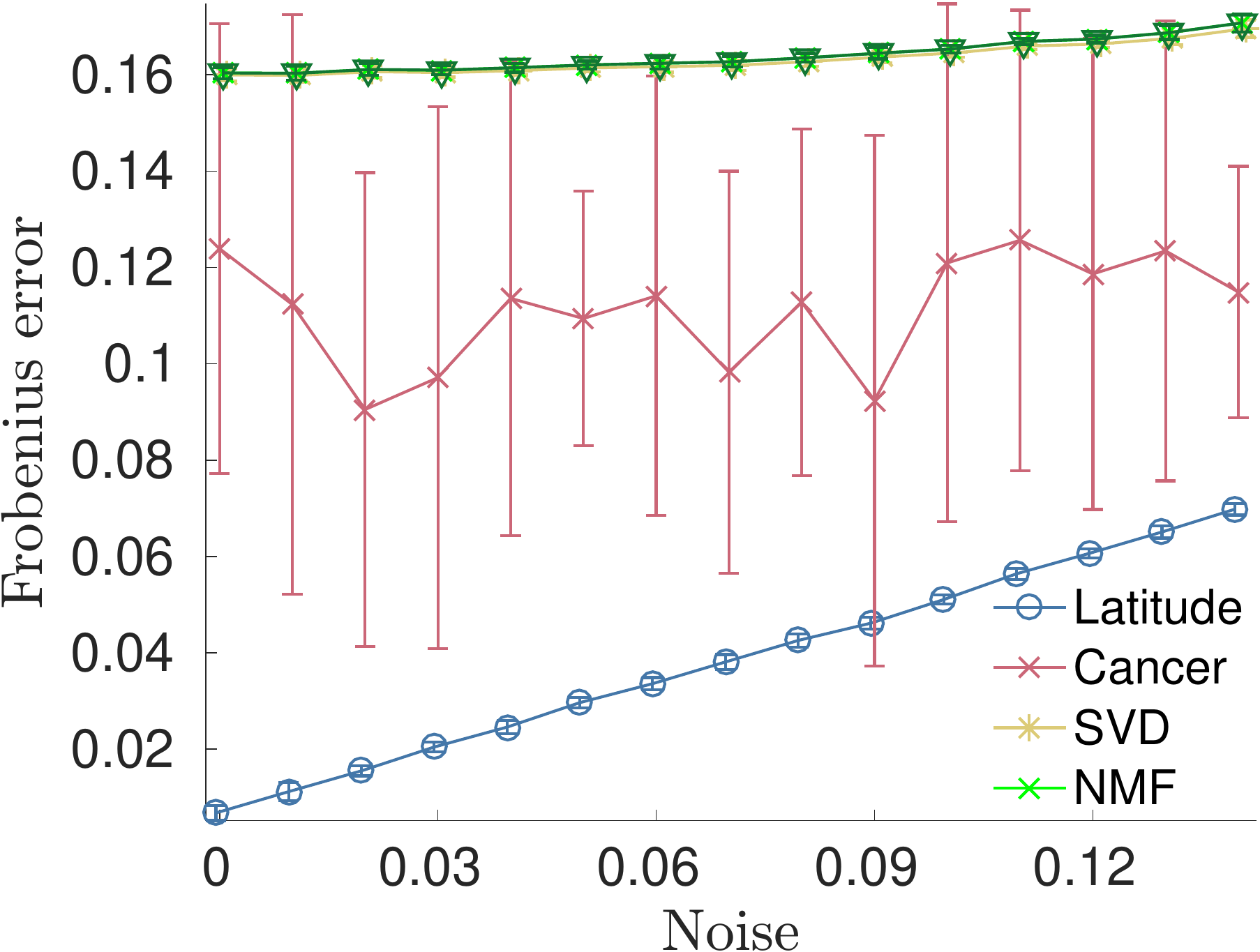}
    \label{trop:noise}
  }
  \hspace{\subfigspace}
  \subfloat[Varying noise with pure NMF data.]{%
    \includegraphics[width=\subfigwidth]{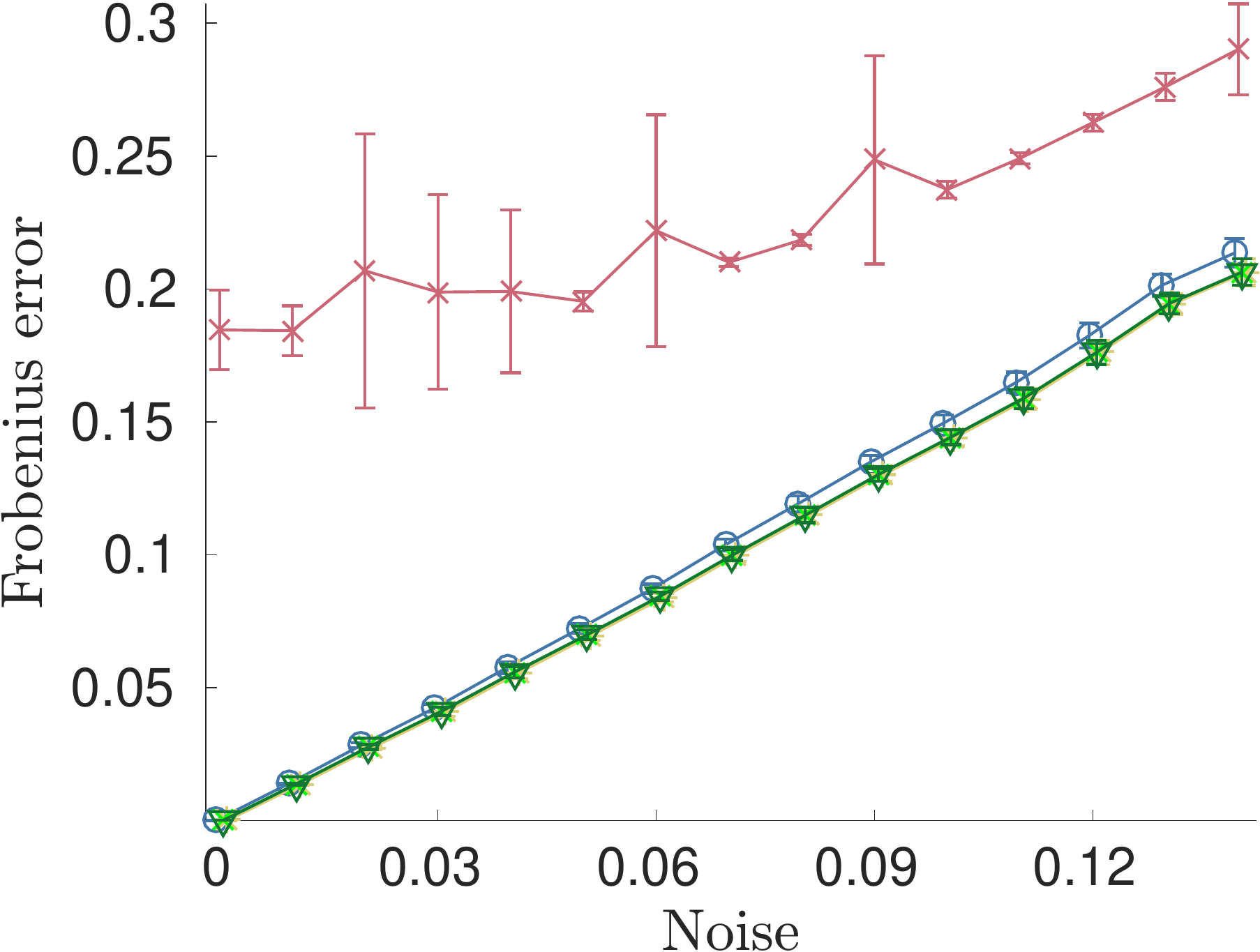}
    \label{nmf:noise}
  }
  \hspace{\subfigspace}
  \subfloat[Varying noise with mixed data.]{%
    \includegraphics[width=\subfigwidth]{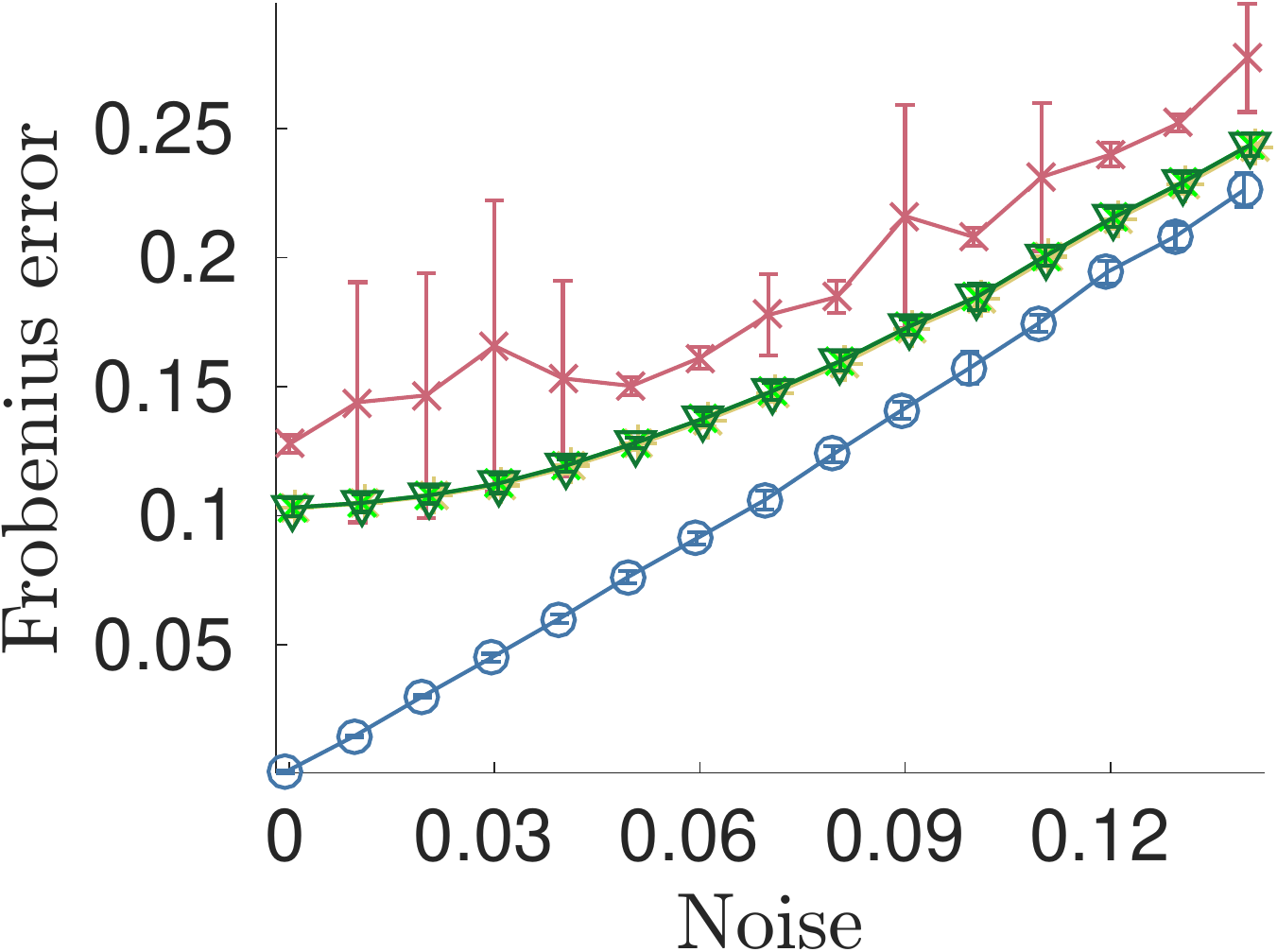}
    \label{mixed:noise}
  }
  \hspace{\subfigspace}
  \subfloat[Varying factor density with mixed data.]{%
    \includegraphics[width=\subfigwidth]{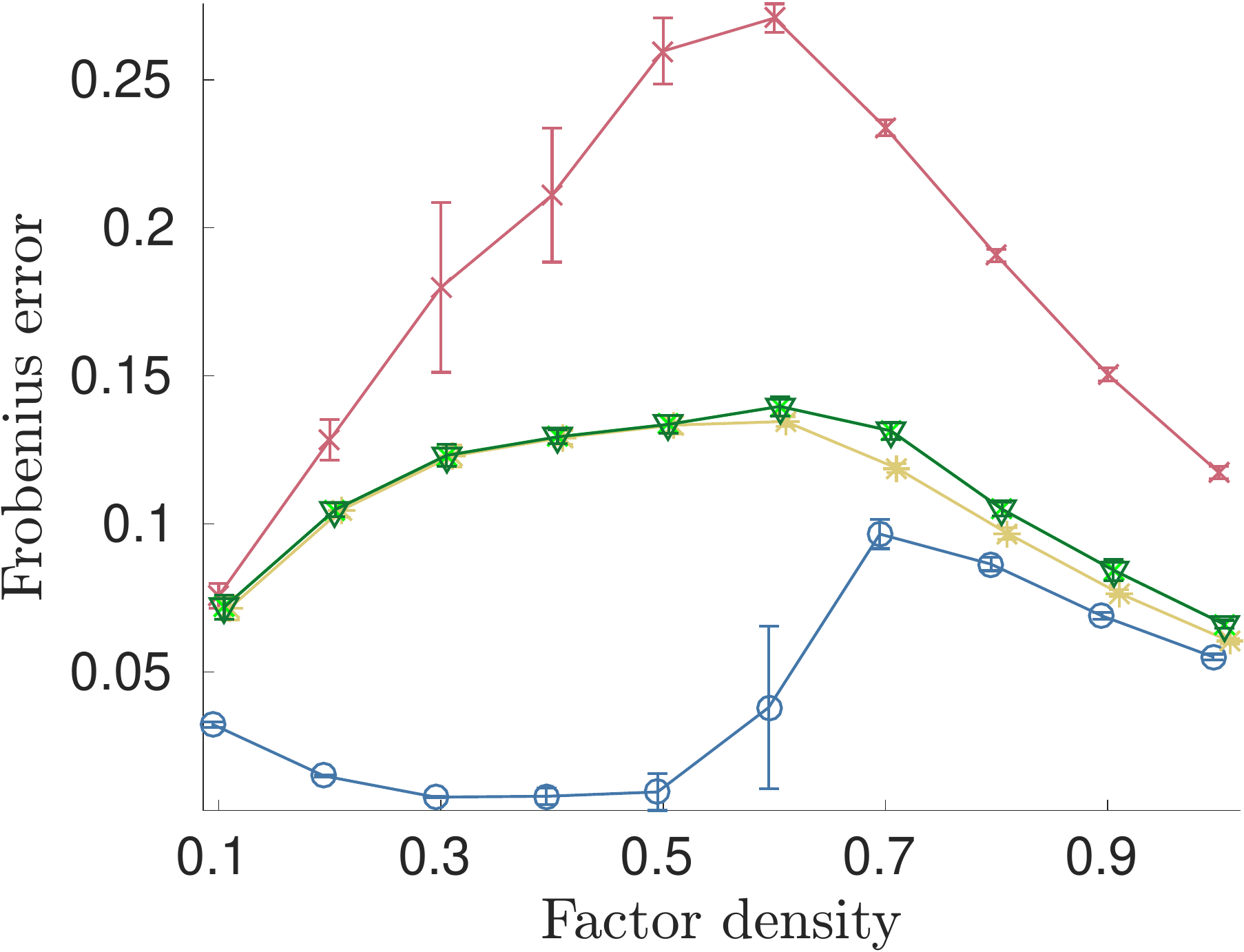}
    \label{density}
  }
  \subfloat[Varying rank with mixed data.]{%
    \includegraphics[width=\subfigwidth]{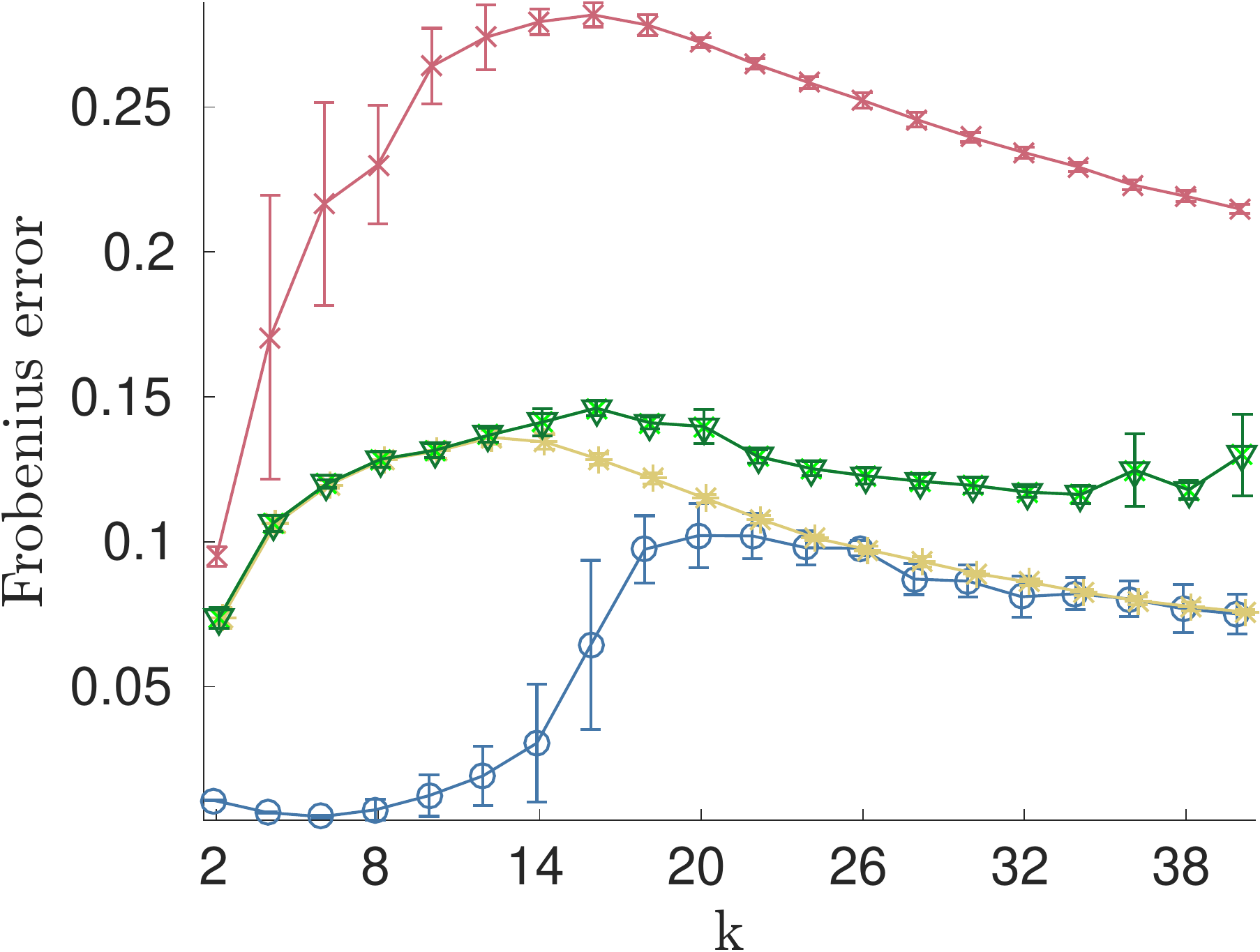}
    \label{rank}
  }
  \hspace{\subfigspace}
  \subfloat[Varying rank with mixed data a high level of noise.]{%
    \includegraphics[width=\subfigwidth]{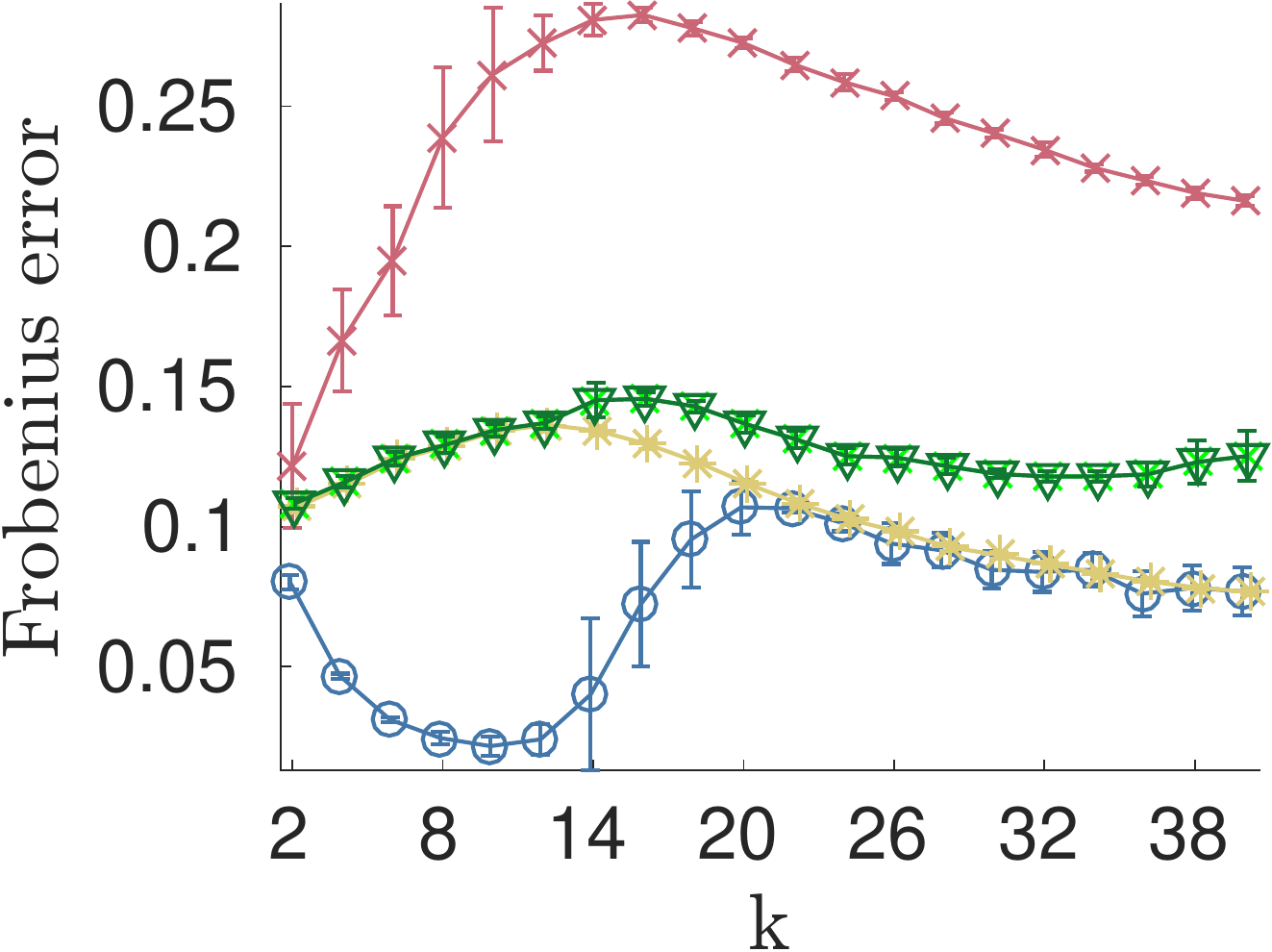}
    \label{dim:high:noise}
  }
  \caption{\textbf{Reconstruction errors on synthetic data.} The $x$-axis represents the varying parameter and the $y$-axis the Frobenius error. All results are averages over 10 random matrices and the width of the error bars is twice the standard deviation.}\label{fig:synth:reconstruct:frob}
\end{figure*} 
\paragraph{Varying noise with pure subtropical data.}  (Fig~\ref{trop:noise}) This experiment tests how well various methods can recover the pure subtropical structure, that is, the extreme case of all parameters being set to $\infty$. The data is generated by multiplying the factors using the subtropical matrix product. We varied the standard deviation of the Gaussian noise from 0 to 0.14 with increments of 0.01. \Latitude is clearly the best method, followed by \Cancer, and \NMF and \SVD come close together in the last place. The reason why \Latitude beats \Cancer on its own kind of data is that it has more leeway in choosing what structure to use, thus being able to fit everywhere where \Cancer approximates the data well, but also deviate from the pure subtropical model when needed. \NMF and \SVD do not seem to find much structure in this experiment. In this and some other experiments \SVD and \NMF produce similar reconstruction errors, which sometimes makes their lines hard to distinguish.
\paragraph{Varying noise with pure NMF data.} (Fig.~\ref{nmf:noise}) This setup is analogous to the previous one, except now the data was generated using the pure NMF structure. Here, \NMF and \SVD are performing very well, as is expected as the data is generated with the NMF structure. \Latitude, although having been initialized by \NMF, only achieves the same results for zero level of noise -- then its results start to slowly deteriorate. The cause of this is that it overfits to the noise. Nevertheless, \Latitude's results are not much worse than \NMF or \SVD, and hence it is definitely applicable to datasets that exhibit the pure NMF structure. Meanwhile \Cancer is the worst of the methods, which is expected given that the data has pure NMF rather than subtropical structure.
\paragraph{Varying noise with mixed data.}  (Fig.~\ref{mixed:noise}) Here we test the actual mixed model by using parameters drawn uniformly at random from the $[-5, 5]$ interval. This means that the expected value of $\co_i + \ro_j$ is 0, which corresponds to the midpoint between the NMF and subtropical structures. The randomness ensures that both structures are present in the data. Here \NMF and \SVD perform much better than for the pure subtropical case, but \Latitude is nevertheless the best method by a big margin, which demonstrates the advantages of combining both models. 
\paragraph{Varying factor density with mixed data.}  (Fig.~\ref{density}) Here we varied the factor density from 10\,\% to 100\,\% with increments of 10\,\%. Again, we have \Latitude as the best method. There is a peculiar bump on its curve at the very low density level. It can be explained by noise having more influence on sparse data, since then the data/noise ratio is worse.
\paragraph{Varying rank with mixed data.}  (Fig.~\ref{rank}) Here we varied the actual rank of the data from 2 to 40 with increments of 2. The factor density was kept at 50\,\%. As in previous experiments, \Latitude performs significantly better, especially for lower ranks. Appendix~\ref{app:vary_k} contains another variation of this setup.
\paragraph{Varying rank with mixed data with a high level of noise.}  (Fig~\ref{dim:high:noise}) Same setup as above, but with a higher level of noise (standard deviation 0.07). \Latitude again performs much better than other methods, albeit having a weird bump for lower ranks. Here again it is explained by lower rank data having also lower density, which exacerbates the effect of the noise.
It is worth mentioning that, with the exception of the subtropical data test (Fig~\ref{trop:noise}), \Cancer gives the highest reconstruction error. This is not surprising since it aims at recovering the subtropical structure, which is no more present in the data in its pure form.
\subsection{Real-World Experiments.}
\label{sec:real-world-exper}
Now that we have evidence that \Latitude can extract the mixed tropical-linear structure when it is present in the data, we want to see if this kind of structure is also present ``in the wild''. For that we ran all the competing algorithms on various real-world datasets. First we briefly describe the data, then provide the numerical comparison of the results of the algorithms, followed by some example results.
\paragraph{Datasets.}
\label{sec:real:data}
Rather than using raw data, we did some common preprocessing for the real-world datasets. To ensure nonnegativity, we subtract from each column its smallest element. In addition, to make the data more uniform, we divide each column by its standard deviation. These steps are performed on all matrices except \News, for which we use the TF-IDF model. 
\Worldclim was obtained from the global climate data repository.\!\footnote{The raw data is available at \url{http://www.worldclim.org/}, accessed 18 July 2017.} It describes historical climate data across different geographical locations in Europe. Columns represent minimum, maximum, and average temperatures and precipitation in different months, and rows ($2575$) are \by{50}{50} kilometer squares of land where measurements were made. Although temperatures and precipitation are seemingly heterogeneous and have different numeric scales, they are equally important in determining the climate type. To be able to use both of them together, prior to performing the standard preprocessing as with other matrices, we subtract from every column its mean.
\NPAS is a nerdiness personality test that uses different attributes to determine the level of nerdiness of a person.\!\footnote{The dataset can be obtained on the online personality website \url{http://personality-testing.info/_rawdata/NPAS-data.zip}, accessed 18 July 2017.} It contains answers by 1418 respondents to a set of 36 questions that asked them to self-assess various statements about themselves on a scale of 1 to 7. We preprocessed \NPAS  analogously to \Worldclim.
\Eigenfaces is a subset of the Extended Yale Face collection of face images~\cite{georghiades2000few}. It consists of $222$ \by{32}{32} pixel images under different lighting conditions. We used a preprocessed data by Xiaofei He et al.\!\footnote{\url{http://www.cad.zju.edu.cn/home/dengcai/Data/FaceData.html}, accessed 18 July 2017} We selected a subset of pictures with lighting from the left.
\News is a subset of the {20 Newsgroups} dataset,\!\footnote{\url{http://qwone.com/~jason/20Newsgroups/}, accessed 18 July 2017} containing the usage of 800 words over 400 posts for 4 newsgroups.\!\footnote{The authors are  grateful to Ata Kab{\'a}n for pre-processing the data, see~\cite{miettinen09matrix}.} Before running the algorithms we transformed the data to TF-IDF values, and scaled by dividing each entry by the greatest entry in the matrix.
\HPI is a land registry house price index.\!\footnote{Available at \url{https://data.gov.uk/dataset/land-registry-house-price-index-background-tables/}, accessed 18 July 2017} Rows ($253$) represent months, columns ($177$) are locations, and entries are residential property price indices. 
Further information about these datasets is available in Table~\ref{tab:real:specs_all} in the Appendix.
 \paragraph{Numerical experiments.}
 The reconstruction errors for all the real-world experiments are shown in Table~\ref{tab:real:world:error}. \Latitude and \SVD are competing for the first place, with \Latitude having the best reconstruction error in 2 datasets and \SVD in 3. All other methods fall significantly behind. It is worth mentioning that \SVD has an advantage in that it its factors are not restricted to nonnegative values. One can also argue that \Latitude has more degrees of freedom due to having one additional dimension of parameters. For this reason we also test the truncated version, called \Lattrunc, that was run with $k-1$ dimensions. It is still the third best method (after \SVD and \Latitude), beating both \NMF and \Cancer by a wide margin. Given these results we can conclude that the mixed tropical-linear structure is present in the datasets that we tested, and that \Latitude is an appropriate algorithm to extract this structure.
 \setlength{\tabcolsep}{0.5em}
 \begin{table}[tb] 
   \centering
   \caption{Reconstruction error for real-world datasets.}
\label{tab:real:world:error}
   \begin{tabular}{@{}lRRRRR@{}}
     \toprule
                      & \text{\Worldclim} & \text{\NPAS} & \text{\Eigenfaces} & \text{\News} & \text{\HPI}  \\
     $k=$         & 10      & 10      & 40      & 20      & 15 \\
     \midrule
     \Latitude  & \bf{0.023} & \bf{0.207} & 0.157 & 0.536 & 0.016     \\
     \Lattrunc  & 0.025 & 0.213 & 0.158 & 0.541 & 0.017     \\
     \SVD        & 0.025 & 0.209 & \bf{0.140} & \bf{0.533} & \bf{0.015}     \\
     \NMF       & 0.080 & 0.223 & 0.302 & 0.541 & 0.124     \\
     \Cancer     & 0.066 & 0.237 & 0.205 & 0.554 & 0.026     \\
     \bottomrule
   \end{tabular}
 \end{table}
\paragraph{Interpretation.}
In order to validate that our approach also provides interpretable results, we study the results with \Worldclim and \Eigenfaces in more detail. We used the ranks from Table~\ref{tab:real:world:error}. NMF is used in climate models~\cite{paatero94positive}, so we would expect this data to have mostly NMF structure, but certain phenomena, such as rainfall, and certain areas, such as mountains or coastal sites, can well have more subtropical structure.
To validate this intuition, we can study the parameter vectors $\co$ and $\ro$ and matrix $\mAlpha = \bigl(\sigma(\co_i + \ro_j)\bigr)_{ij}$. For the \Worldclim data, these are depicted in Figure~\ref{fig:worldclim_param}. Recall that for the parameters, negative values indicate NMF-type structure, while positive values indicate subtropical-type structure. Vector $\co$ corresponds to the geographical locations, and its values are plotted in a map in Figure~\ref{fig:wordclim_param:co}. As we expected, most of the data has NMF-type structure (depicted as blue), but especially Lapland, Portugal, and some mediterranean coastlines have more subtropical-type structure. These areas probably have some dominating climate phenomena, for example, heavy rainfall or low temperatures, that is best explained using subtropical structure.
\begin{figure*}[t]
  \centering
  \subfloat[Vector $\co$]{%
    \label{fig:wordclim_param:co}%
    \includegraphics[width=0.75\subfigwidth]{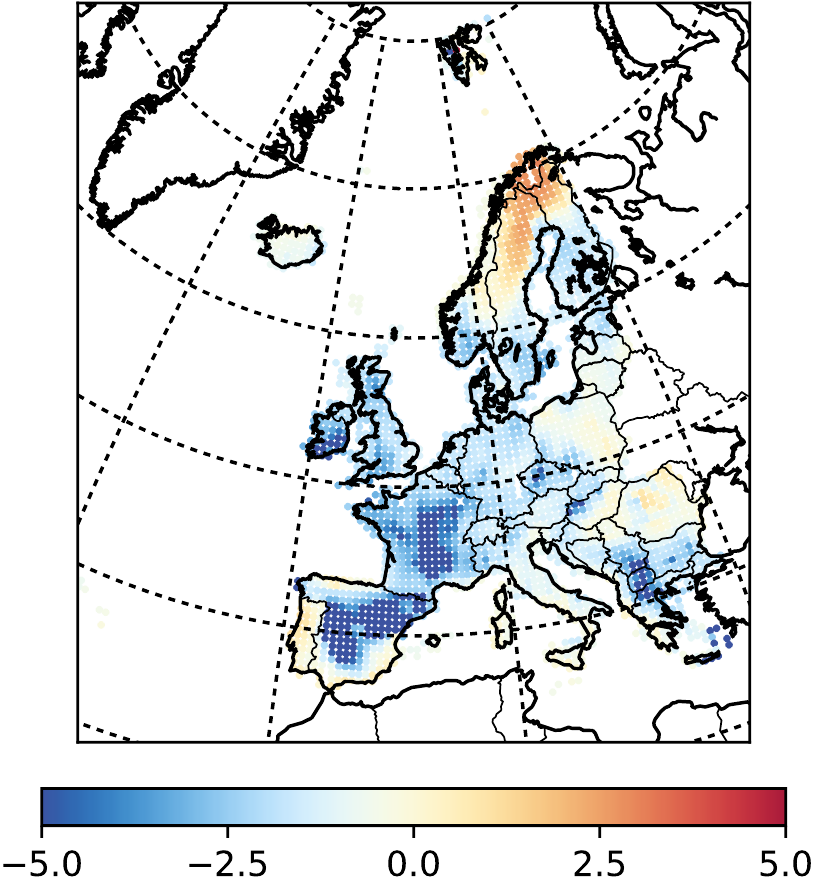}%
  }\qquad
  \subfloat[Vector $\ro$]{%
    \label{fig:worldclim_param:ro}%
    \includegraphics[width=\subfigwidth]{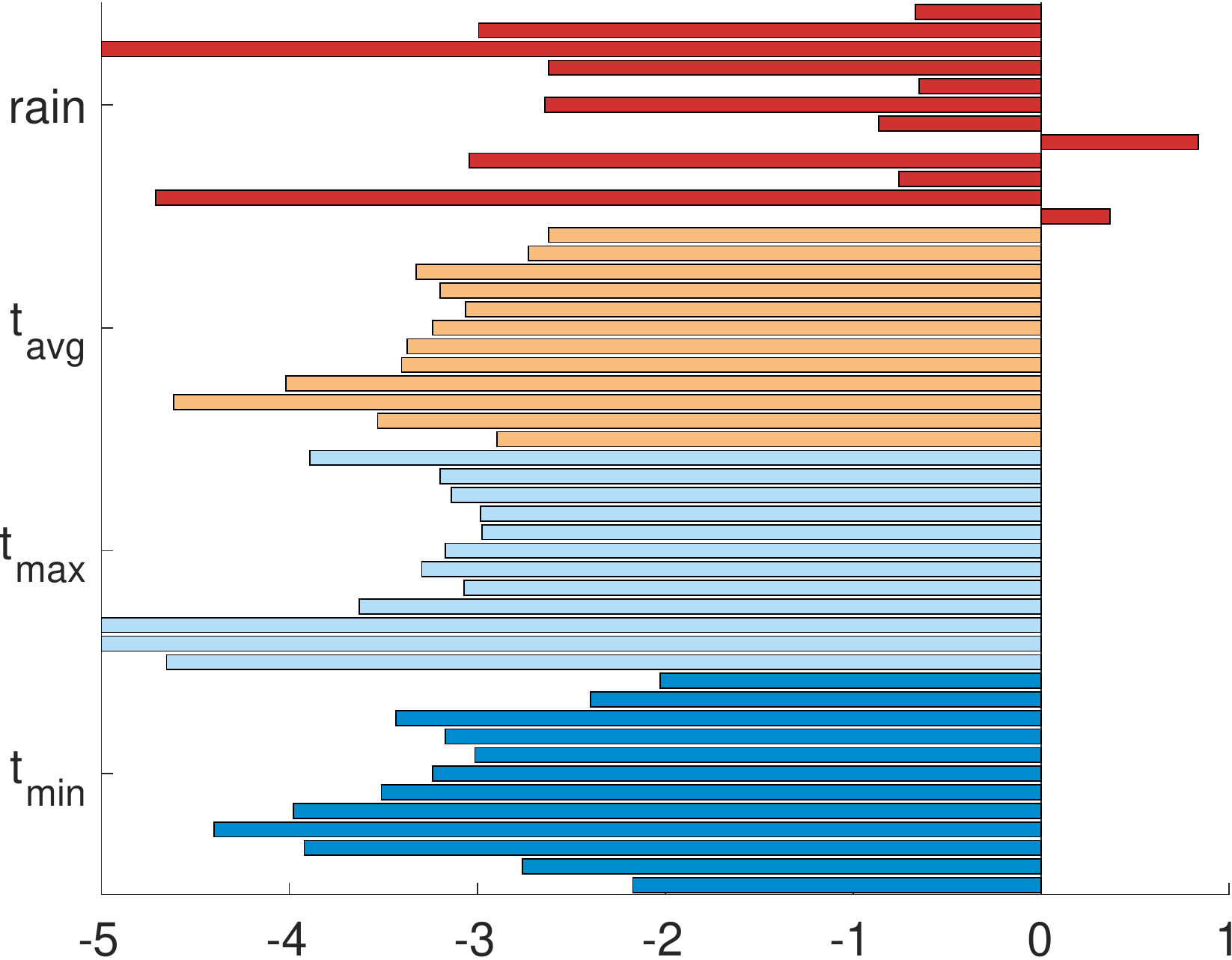}%
  } \qquad
  \subfloat[Matrix $\mAlpha$]{%
    \label{fig:worldclim_param:alpha}
    \includegraphics[width=\subfigwidth]{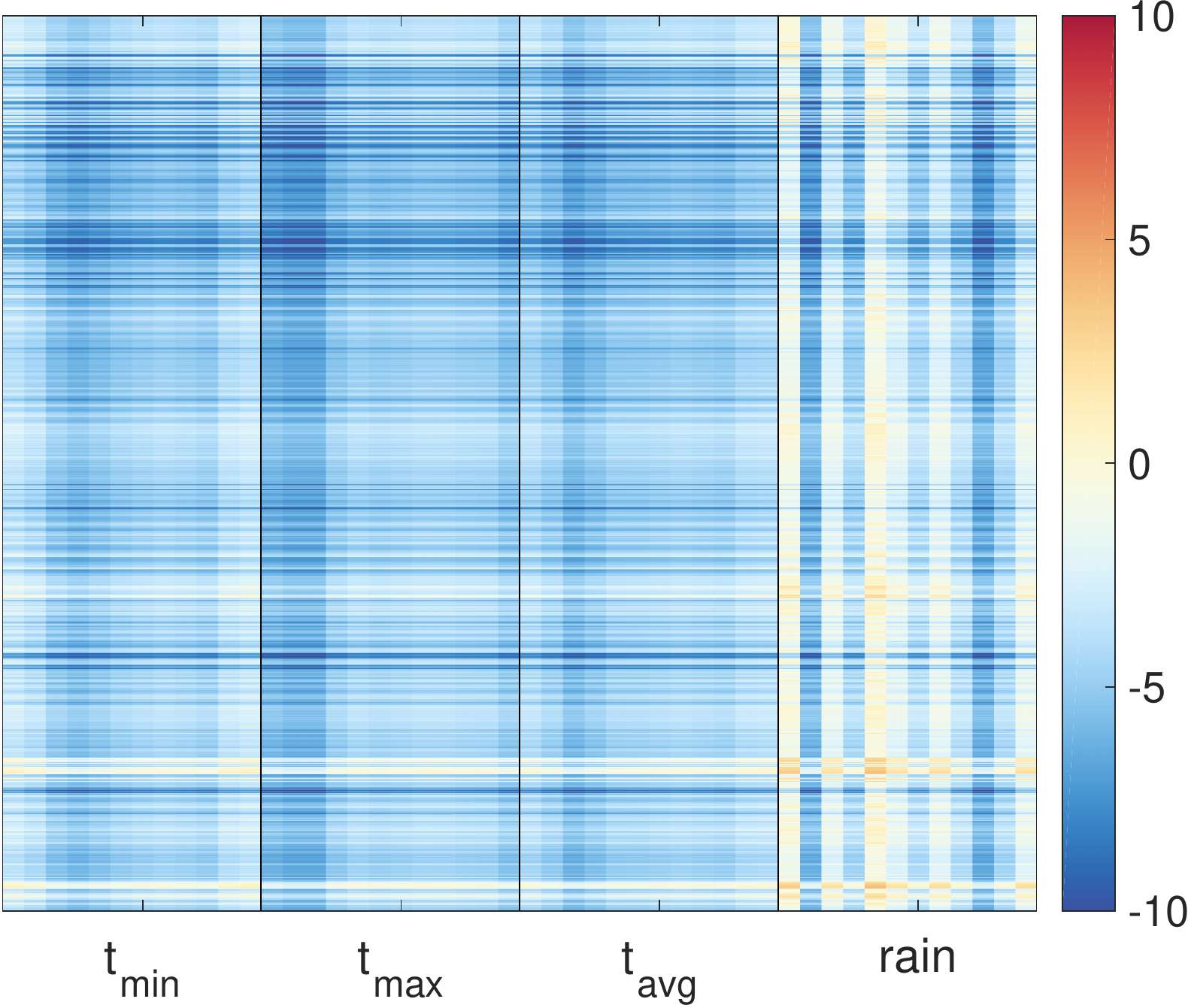}%
  }
  \caption{Visualizations for the parameters in the decomposition of \Worldclim. (a) Values in vector $\co$ plotted in a map. (b) Values in vector $\ro$ shown as a bar plot. The variables are divided in four groups of twelve months corresponding to minimum, maximum, and average temperature, and precipitation (\tmin, \tmax, \tavg, and \preci, respectively). January is always at the bottom. (c) The matrix $\mAlpha = \bigl(\sigma(\co_i + \ro_j)\bigr)_{i,j}$. Columns are divided in four groups of twelve months, as in (b). January is always at the left.}
  \label{fig:worldclim_param}
\end{figure*}
Vector $\ro$ corresponds to the climate variables. The values in $\ro$ are shown in Figure~\ref{fig:worldclim_param:ro}, where we can see that most variables are negative, that is, they have NMF-type structure. Precipitation is an exception, as the precipitation variables for January and May are in fact positive, indicating more subtropical-type structure. The complete parameter matrix $\mAlpha$ is shown in Figure~\ref{fig:worldclim_param:alpha}. Most elements in the factorization have medium to strong NMF-type structure, but there exist also elements with more subtropical-type structure.
The vector $\co$ for the \Eigenfaces data corresponds to the pixels and is depicted in Figure~\ref{fig:faces:co}. It is clear that the dominating features of faces -- eyes, nose, and mouth, are best expressed using subtropical-type structure, while the other parts are better explained using NMF-type structure. This is to be expected, as the subtropical areas are those where lighting has the largest effects (either as bright areas, or areas in shadows, depending on the direction of the light). These extremes are often easiest to describe using the subtropical structure.

\begin{figure*}[t]
  \centering
  \subfloat[Vector $\co$]{%
    \label{fig:faces:co}%
    \includegraphics[width=0.9\subfigwidth]{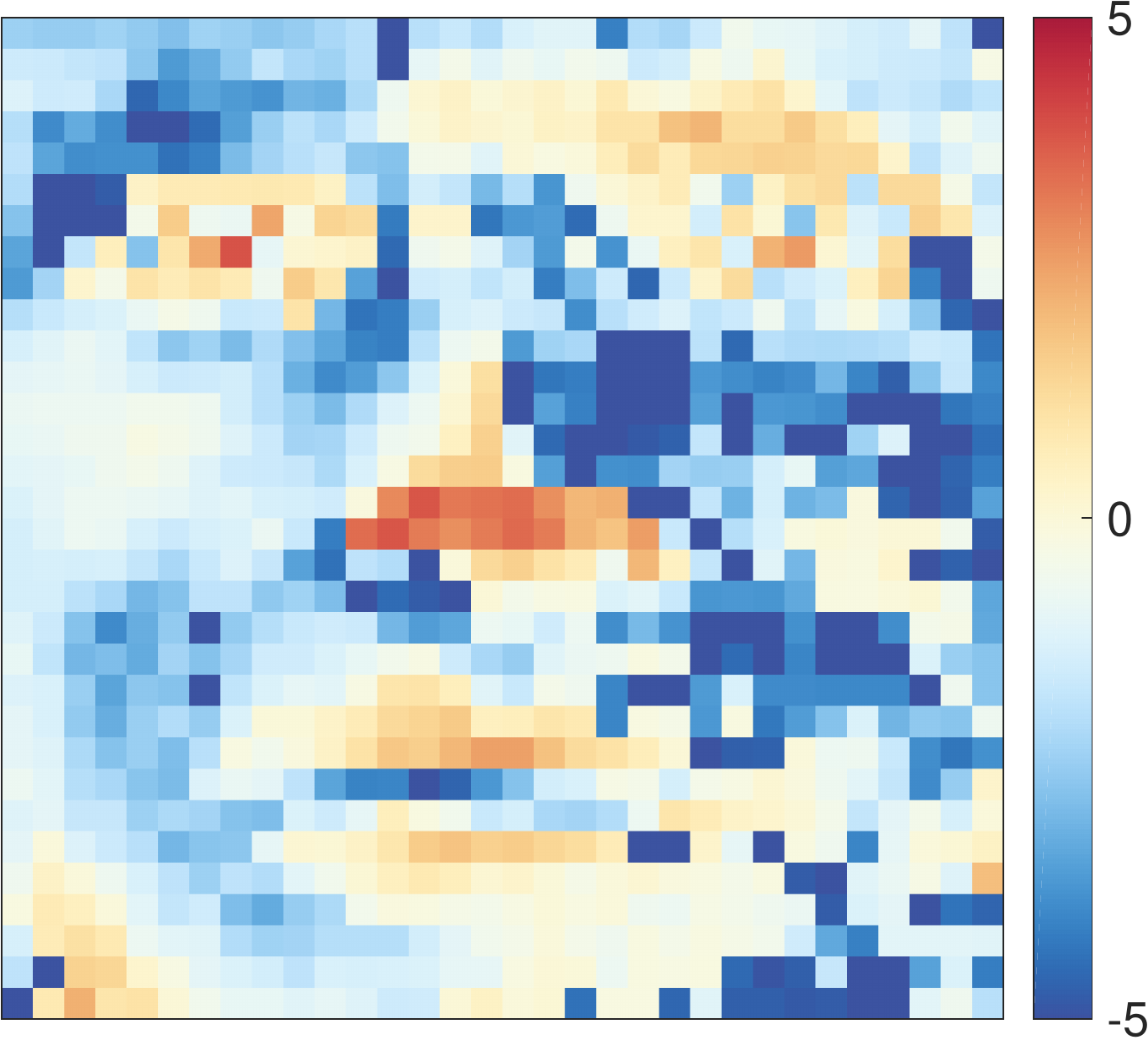}
  }\qquad
  \subfloat[Matrix $\mAlpha$]{%
    \label{fig:faces:alpha}%
    \includegraphics[width=\subfigwidth]{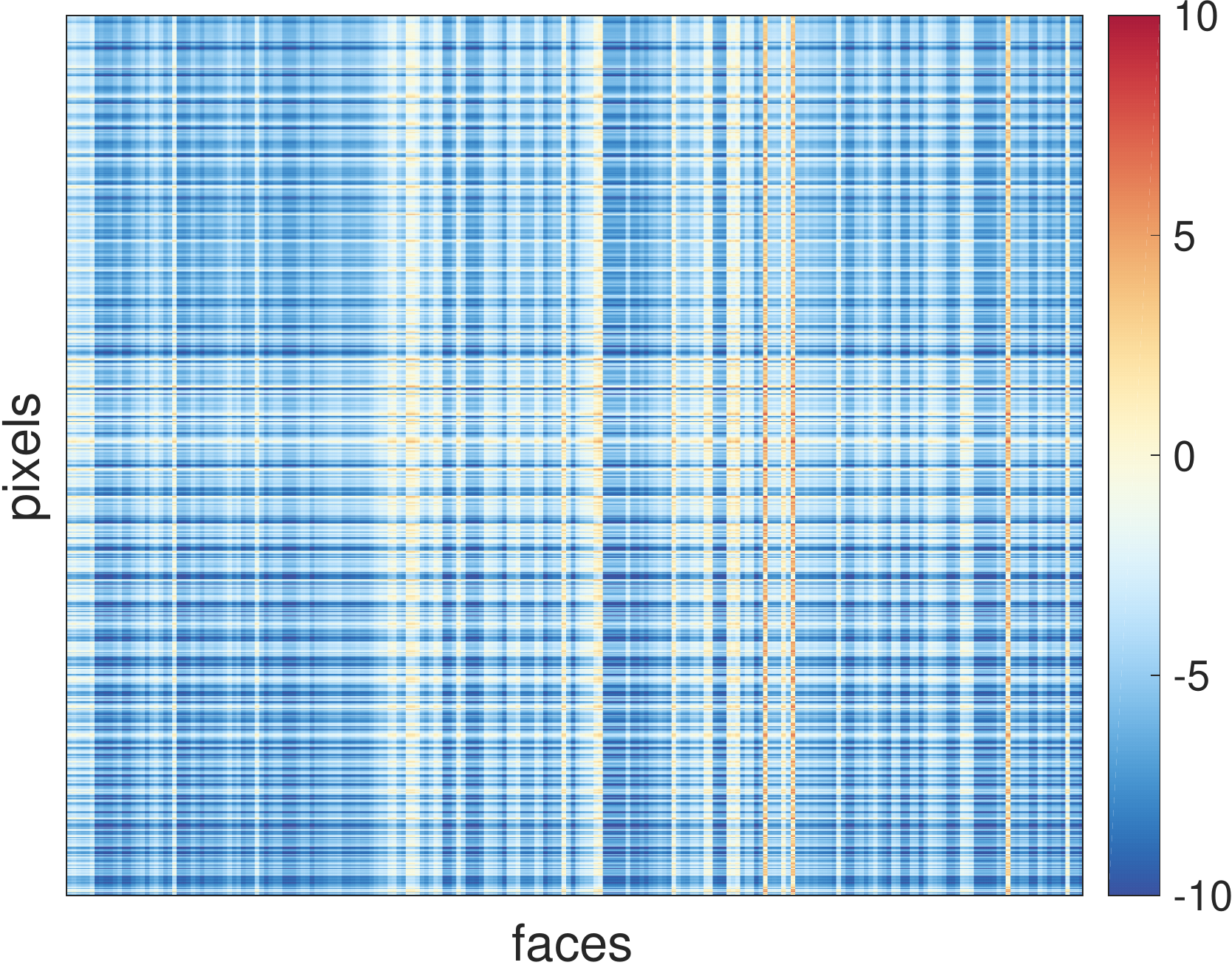}
  }\qquad
   \subfloat[Columns of $\mB$]{%
    \label{fig:faces:B}%
    \begin{tabular}[b]{@{}p{.4\subfigwidth}@{\hspace{1pt}}p{.4\subfigwidth}@{}}
      \includegraphics[width=\linewidth]{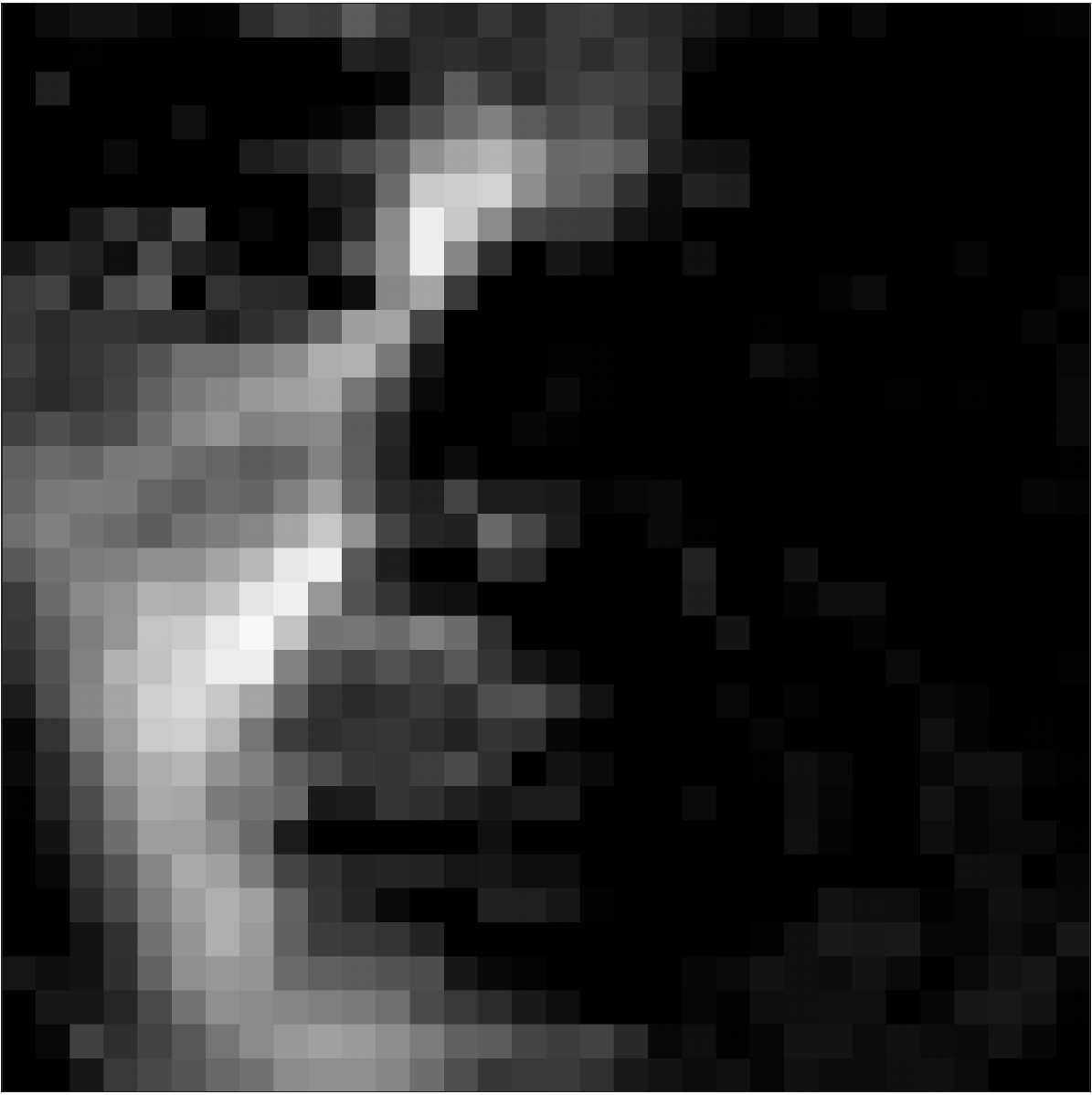} &
                                                                   \includegraphics[width=\linewidth]{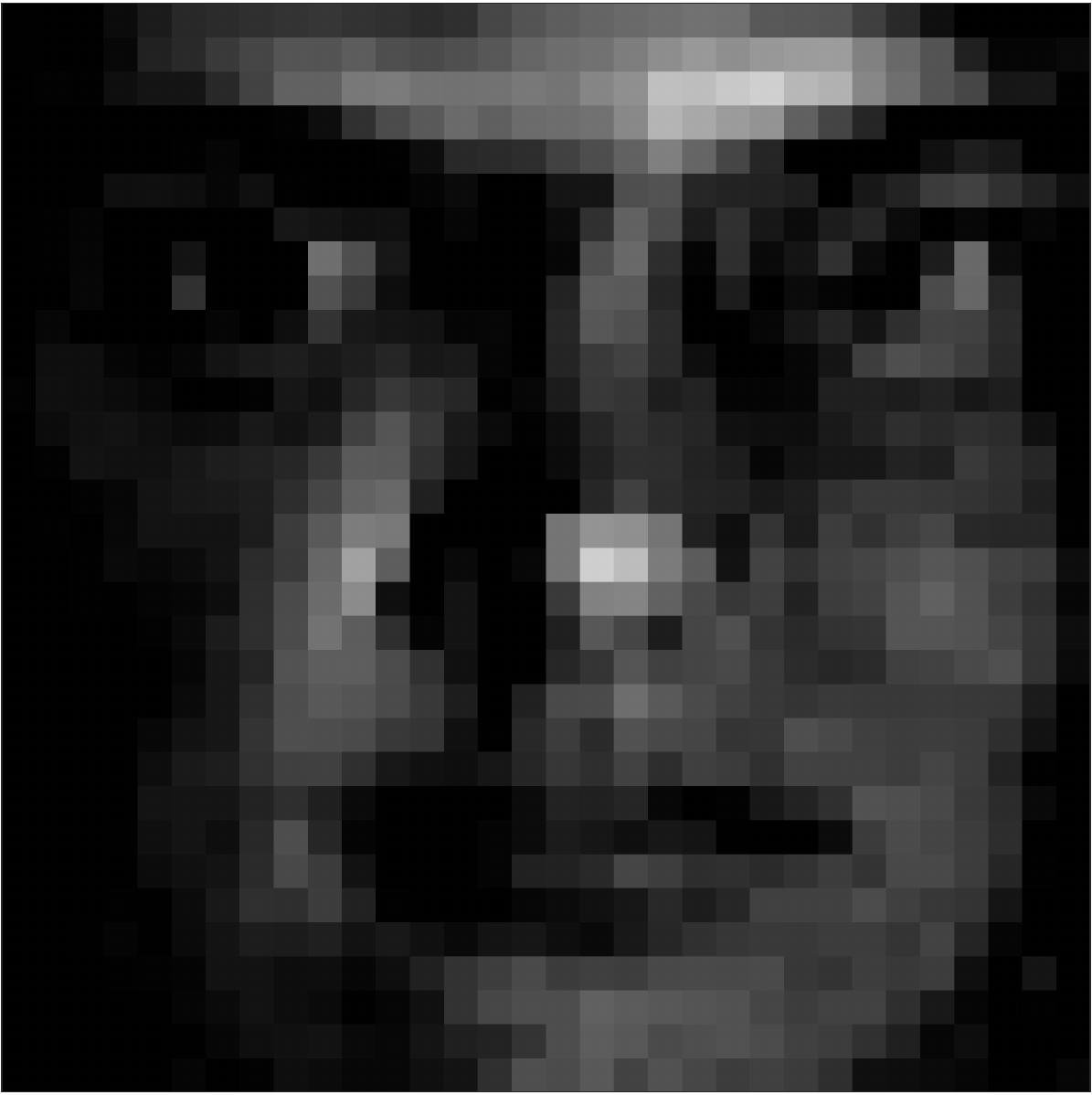} \\[-2pt]
      \includegraphics[width=\linewidth]{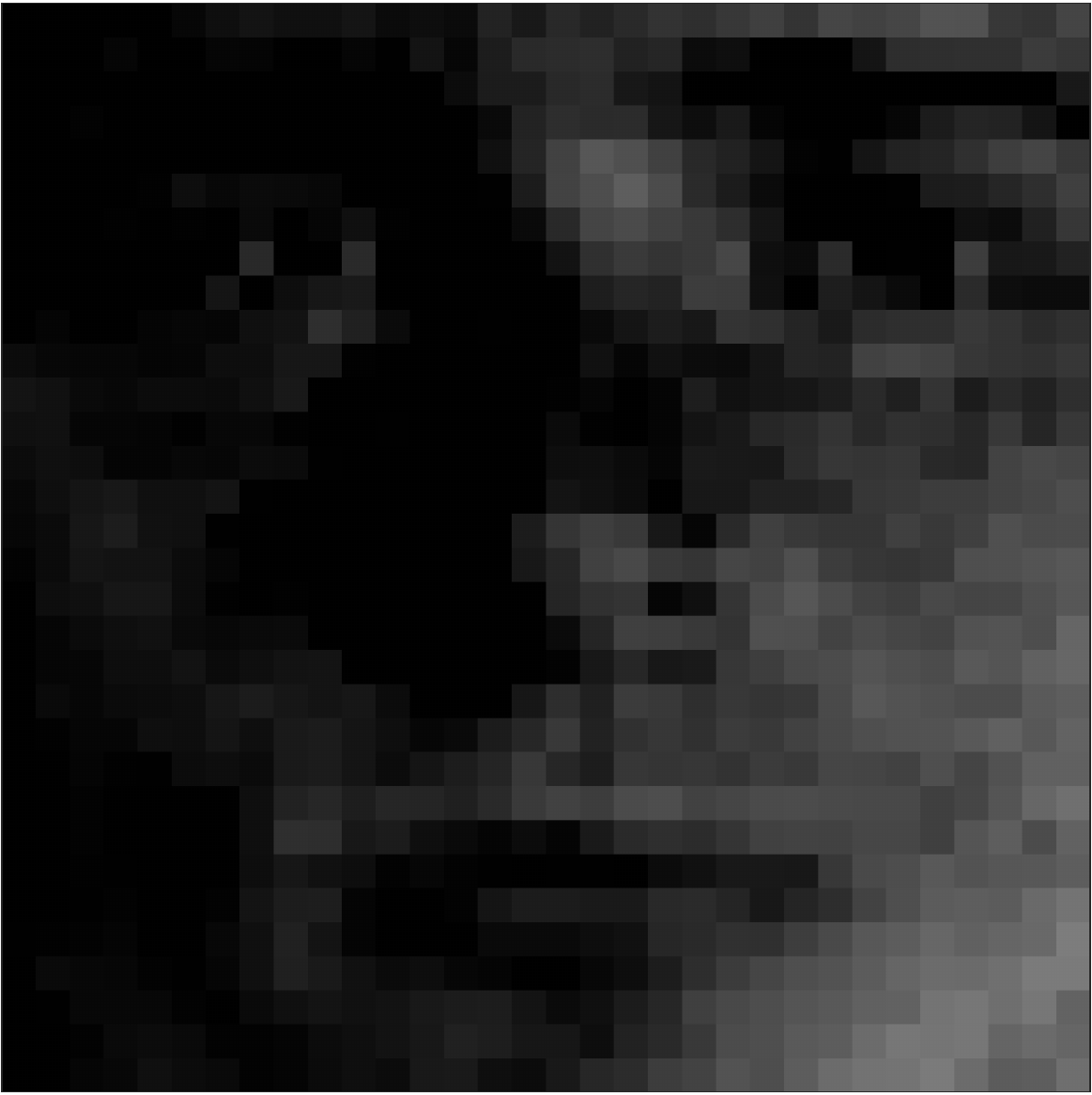} &
                                                                   \includegraphics[width=\linewidth]{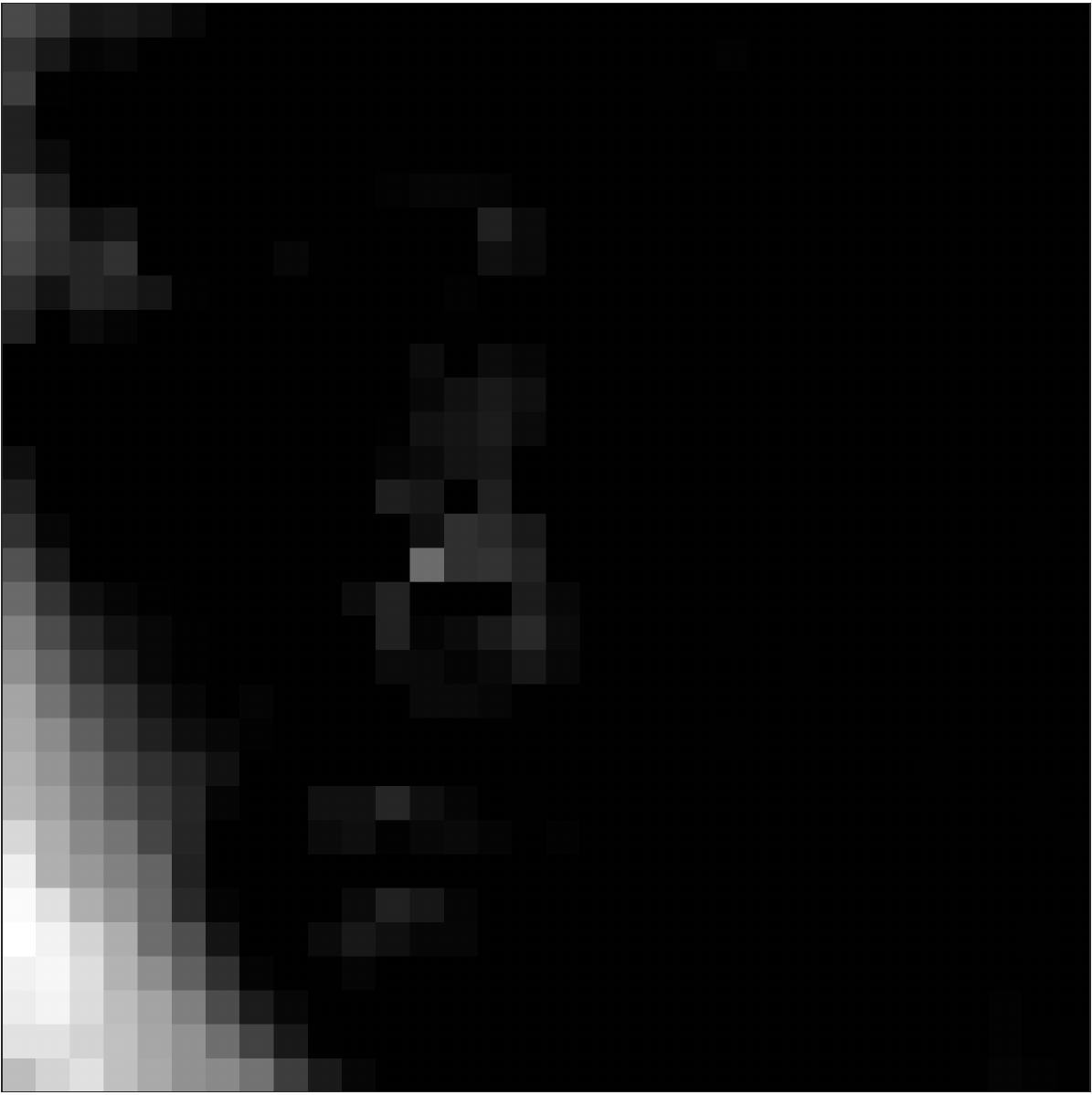} \\
    \end{tabular}
  }
  \caption{(a) Vector $\co$ for the \Eigenfaces data as an image. (b) Matrix $\mAlpha$ for the \Eigenfaces data. (c) Four columns of $\mB$ for the \Eigenfaces data.}
  \label{fig:faces}
\end{figure*}

Similarly to \Worldclim, we can also plot the matrix $\mAlpha$.\!\footnote{Plots of the $\mAlpha$ matrix for the other data sets are in Figure~\ref{fig:param_matrices} in the Appendix.} There we notice that there are some faces that have a strong subtropical structure, and again, most of the structure is mostly NMF.
To validate that also the factors are interpretable, we present examples from the left factor matrix $\mB$ for the \Eigenfaces data in Figure~\ref{fig:faces:B}. We see that factors mostly depict facial features, except the one at the bottom right, which can be used to add lighting effects to the bottom left part of the figures.

\section{Related Work}
\label{sec:related-work}

Nonnegative matrix factorization is a well-studied data analysis method, and over time, many algorithms have been proposed (e.g.~\cite{lee01algorithms,paatero94positive}; see \cite{cichocki09nonnegative} for a comprehensive treatise). NMF has been applied to many data analysis problems (e.g.\ \cite{pauca2004text,xu2003document,brunet2004metagenes}) and algorithms for computing it are included in all major data analysis packages.

Subtropical (or max-times) matrix factorizations are less commonly used in data analysis. The use of approximate low-rank SMF in data analysis was first presented by~\cite{karaev16capricorn} together with the \Capricorn algorithm. \Capricorn is designed for subtropical noise, and later \cite{karaev16cancer} presented the \Cancer algorithm that deals with Gaussian noise. Recently, \Capricorn and \Cancer have been unified under the \Equator framework~\cite{karaev17algorithms_arxiv}, that also provided other quality functions than the squared error.

In general a tropical semiring is any semiring in which the addition operation is max or min. Other than max-times two well studied examples are the max-plus and min-plus semirings \cite{butkovic,heidergrot}. Note that the max-plus and min-plus semirings are isomorphic via the map $h(x)=-x$ and that this transformation preserves the norm $d(x,y)=\abs{x-y}$. Note also that max-plus (tropical) and max-times (subtropical) are also isomorphic via $h(x)=\exp(x)$, but that the norm is not preserved by this transformation~\cite{karaev16capricorn}. This means that whilst the algebraic structures of max-plus and max-times are the same, approximation in max-plus works differently to approximation in max-times. Intuitively max-times gives a lower weight to relative perturbations of smaller numbers. Approximation of network structures by low-rank min-plus matrix factorization is explored in \cite{jlh2017}. Possible applications of max-plus low-rank matrix factorizations to non-linear image processing are discussed in \cite{angulo2017}. 


\section{Conclusions}
\label{sec:conclusions}

Mixed linear--tropical factorization is an interesting novel model for matrix factorization. By smoothly combining factorizations over two algebras, it allows us to model complex structure with an interpretable way. Our algorithm, \Latitude, was able to consistently obtain better reconstruction errors than either NMF or SMF algorithms. Indeed, \Latitude was often better than even SVD. And while SVD comes with well-known limitations to the interpretability, \Latitude's factorization is easier to interpret due to the nonnegative factor matrices and intuitive interpretation of the parameter vectors.

While \Latitude generally showed superior performance compared to NMF of SMF, there were a few instances where it performed slightly worse, which was due to overfitting to the noise. This raises the question of the use of regularization in mixed linear--tropical factorization and is left for future studies.

\Latitude has running time which is linear in the input matrix's dimensions, making it a rather scalable method. Its reliance on nonnegative least-squares optimization, however, can be a limiting factor in scaling \Latitude to big data and distributed systems. Our goal in this paper was to establish the feasibility and usability of mixed linear--tropical models, and developing more scalable algorithms is a natural next step.

In this work we constrained the parameter matrix \mAlpha{} to tropical rank-1 (before the logistic transformation). As we saw in Proposition~\ref{prop:constant_factors}, some constraints are mandatory for sensible decompositions. It is an interesting open question how much more power would a tropical rank-2 parameter matrix give. Also, the relationship between the rank of the factorization and the rank of the parameter matrix is currently unknown. 


\bibliographystyle{abbrv}
\bibliography{library}

\begin{thebibliography}{10}

\bibitem{angulo2017}
J.~Angulo and S.~Velasco-Forero.
\newblock Non-negative sparse mathematical morphology.
\newblock {\em Advances in Imaging and Electron Physics}, 2017.

\bibitem{brunet2004metagenes}
J.-P. Brunet, P.~Tamayo, T.~R. Golub, and J.~P. Mesirov.
\newblock Metagenes and molecular pattern discovery using matrix factorization.
\newblock {\em Proc. Natl. Acad. Sci. U.S.A.}, 101(12):4164--4169, 2004.

\bibitem{butkovic}
P.~Butkovi\v{c}.
\newblock {\em Max-Linear Systems: {Theory} and Algorithms}.
\newblock Springer, 2010.

\bibitem{cichocki09nonnegative}
A.~Cichocki, R.~Zdunek, A.~H. Phan, and S.-i. Amari.
\newblock {\em {Nonnegative Matrix and Tensor Factorizations: Applications to
  Exploratory Multi-way Data Analysis and Blind Source Separation}}.
\newblock John Wiley {\&} Sons, Chichester, 2009.

\bibitem{georghiades2000few}
A.~S. Georghiades, P.~N. Belhumeur, and D.~J. Kriegman.
\newblock From few to many: Generative models for recognition under variable
  pose and illumination.
\newblock In {\em FG '00}, pages 277--284, 2000.

\bibitem{heidergrot}
B.~Heidergott, G.~J. Olsder, and J.~van~der Woude.
\newblock {\em Max Plus at Work: Modeling and Analysis of Synchronized Systems:
  A Course on Max-Plus Algebra and Its Applications}.
\newblock Princeton University Press, 2005.

\bibitem{jlh2017}
J.~Hook.
\newblock Linear regression over the max-plus semiring: algorithms and
  applications.
\newblock {\em arXiv:1712.03499.}, 2017.

\bibitem{karaev16cancer}
S.~Karaev and P.~Miettinen.
\newblock {Cancer: Another Algorithm for Subtropical Matrix Factorization}.
\newblock In {\em ECML-PKDD '16}, pages 576--592, 2016.

\bibitem{karaev16capricorn}
S.~Karaev and P.~Miettinen.
\newblock {Capricorn: An Algorithm for Subtropical Matrix Factorization}.
\newblock In {\em SDM '16}, pages 702--710, 2016.

\bibitem{karaev17algorithms_arxiv}
S.~Karaev and P.~Miettinen.
\newblock {Algorithms for Approximate Subtropical Matrix Factorization}.
\newblock Technical Report 1707.08872, arXiv, July 2017.

\bibitem{lee01algorithms}
D.~D. Lee and H.~S. Seung.
\newblock {Algorithms for Non-negative Matrix Factorization}.
\newblock In {\em NIPS '01}, pages 556--562, 2001.

\bibitem{miettinen09matrix}
P.~Miettinen.
\newblock {\em Matrix decomposition methods for data mining: {Computational}
  complexity and algorithms}.
\newblock PhD thesis, University of Helsinki, 2009.

\bibitem{paatero94positive}
P.~Paatero and U.~Tapper.
\newblock {Positive matrix factorization: A non-negative factor model with
  optimal utilization of error estimates of data values}.
\newblock {\em Environmetrics}, 5:111--126, 1994.

\bibitem{pauca2004text}
V.~P. Pauca, F.~Shahnaz, M.~W. Berry, and R.~J. Plemmons.
\newblock Text mining using nonnegative matrix factorizations.
\newblock In {\em SDM '04}, pages 22--24, 2004.

\bibitem{vavasis09complexity}
S.~A. Vavasis.
\newblock {On the Complexity of Nonnegative Matrix Factorization}.
\newblock {\em SIAM J. Optim.}, 20(3):1364--1377, 2009.

\bibitem{xu2003document}
W.~Xu, X.~Liu, and Y.~Gong.
\newblock Document clustering based on non-negative matrix factorization.
\newblock In {\em SIGIR '03}, pages 267--273, 2003.

\end{thebibliography}

\appendix

\section{Varying $k$ Test Without the Subtropical Part}
\label{app:vary_k}

Earlier we have observed that in the varying dimensionality experiments \Latitude and \SVD become very close for higher values of $k$. This inspired a hypothesis that as $k$ grows, the mixed linear-tropical model becomes easier to describe using the standard algebra. Recall that for matrices $\mB \in \rgnk$ and $\mC \in \rgkm$ and parameters $\mAlpha \in [0,1]^{n\times m}$, the element $i, j$ of the mixed linear-tropical matrix product is given by a convex combination of $(\mB \maxprod \mC)_{ij}$ and $(\mB \mC)_{ij}$
\begin{equation} \label{convex:combo}
(\mB \mixprod_{\mAlpha} \mC)_{ij} = \mAlpha_{ij} (\mB \maxprod \mC)_{ij} + (1 - \mAlpha_{ij}) (\mB \mC)_{ij} \;.
\end{equation}

 It is clear that if densities of $\mB$ and $\mC$ remain fixed, then as $k$ grows, the second term of \eqref{convex:combo} becomes more and more dominant. This is because on expectation the sum of $k$ elements grows much faster with $k$ than does their maximum. As a result, when all other parameters are fixed, the influence of the tropical term diminishes as the dimensionality grows, and the data becomes more ``classical''. That does not mean, however, that the structure becomes NMF-like since all the elements inside the NMF part are still scaled by $\mAlpha$. To test our conjecture we once again generated data with varying $k$, but this time without the subtropical term in \eqref{convex:combo}. The results are shown in Figure~\ref{fig:synth:only:nmf}. It is apparent that \SVD has improved compared to normal mixed model, and for $k>8$ it produces better reconstruction errors than \Latitude. It is worth noting that this experiment was made to verify our hypothesis and does not follow the model that \Latitude is designed to solve. As expected, \NMF does not perform well -- scaling by the parameter $\mAlpha$ seems to destroy the structure it is looking for. We did not include \Cancer in this experiment due to its generally poor results for non-tropical data and slow performance for higher $k$s.

\begin{figure*}[t]
  \centering
  \subfloat[Varying rank with mixed data.]{%
    \includegraphics[width=\subfigwidth]{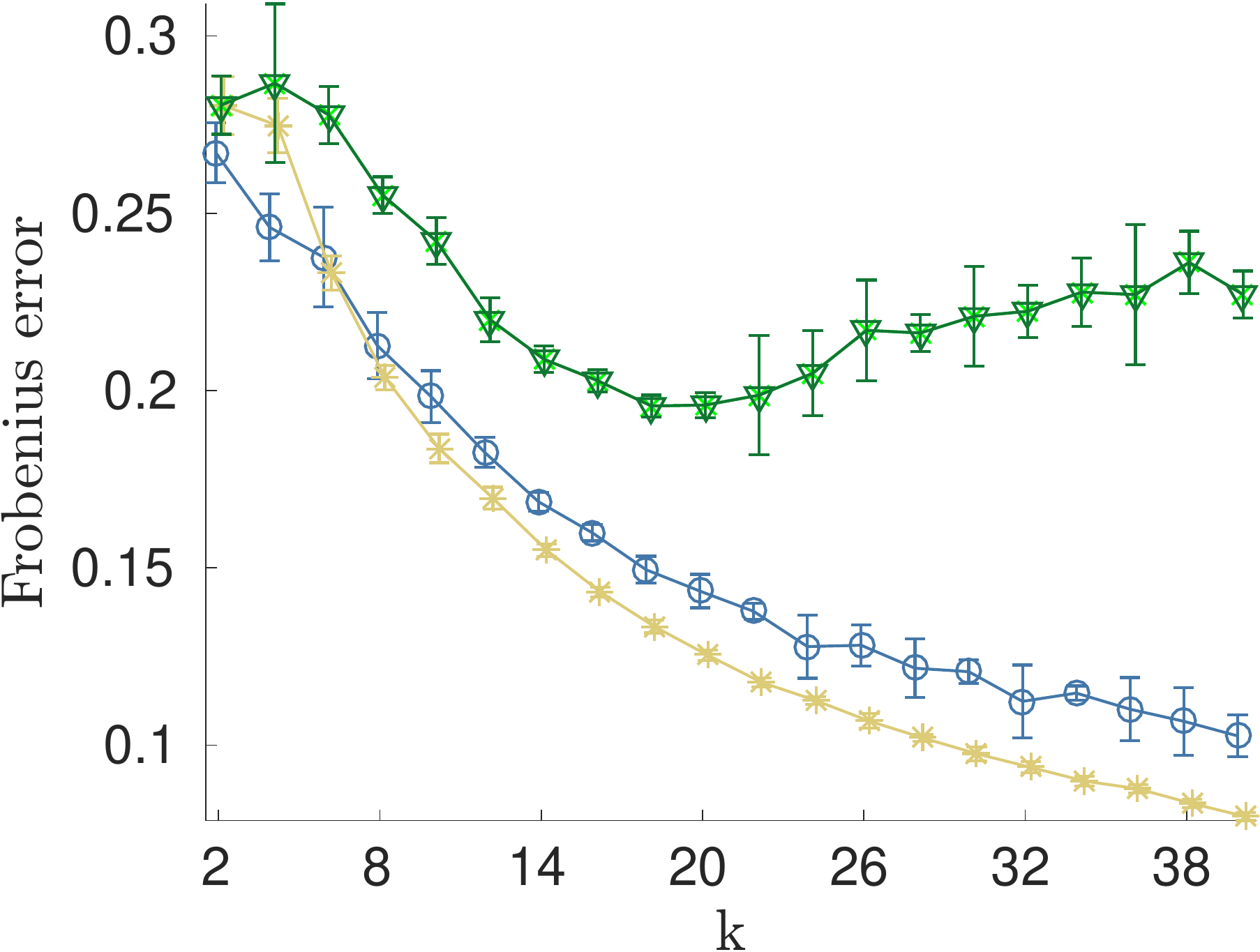}
    \label{rank:only:nmf}
  }
  \caption{\textbf{Reconstruction errors for varying $k$ with the subtropical part of the data removed.} The $x$-axis represents $k$ and the $y$-axis the Frobenius error. All results are averages over 10 random matrices and the width of the error bars is twice the standard deviation.}\label{fig:synth:only:nmf}
\end{figure*} 

\section{Analysis of Real-World Factors}
\label{app:analysis-real-world}

The properties of the real-world data sets are summarized in Table~\ref{tab:real:specs_all}. 

\setlength{\tabcolsep}{0.5em}
   \begin{table}[tb]
   \centering
   \caption{Real world datasets properties.}
   \label{tab:real:specs_all}
   \begin{tabular}{@{}lRRR@{}}
     \toprule
     Dataset & $Rows$ & $Columns$ & $Density$ \\
     \midrule
     \Worldclim  & 2575 & 48  & 99.9\%     \\
     \NPAS         & 1418 & 36  & 99.6\%     \\
     \Eigenfaces & 1024 & 222 & 97.0\%     \\
     \News       & 400  & 800 & 3.5\%      \\
     \HPI            & 253  & 177 & 99.5\%     \\
     \bottomrule
   \end{tabular}
 \end{table}

The factor matrices $\mAlpha$ for \NPAS, \HPI, and \News can be seen in Figure~\ref{fig:param_matrices}.

\begin{figure*}
  \centering
  \subfloat[\NPAS]{%
    \label{fig:wordclim_param:co}%
    \includegraphics[width=\subfigwidth]{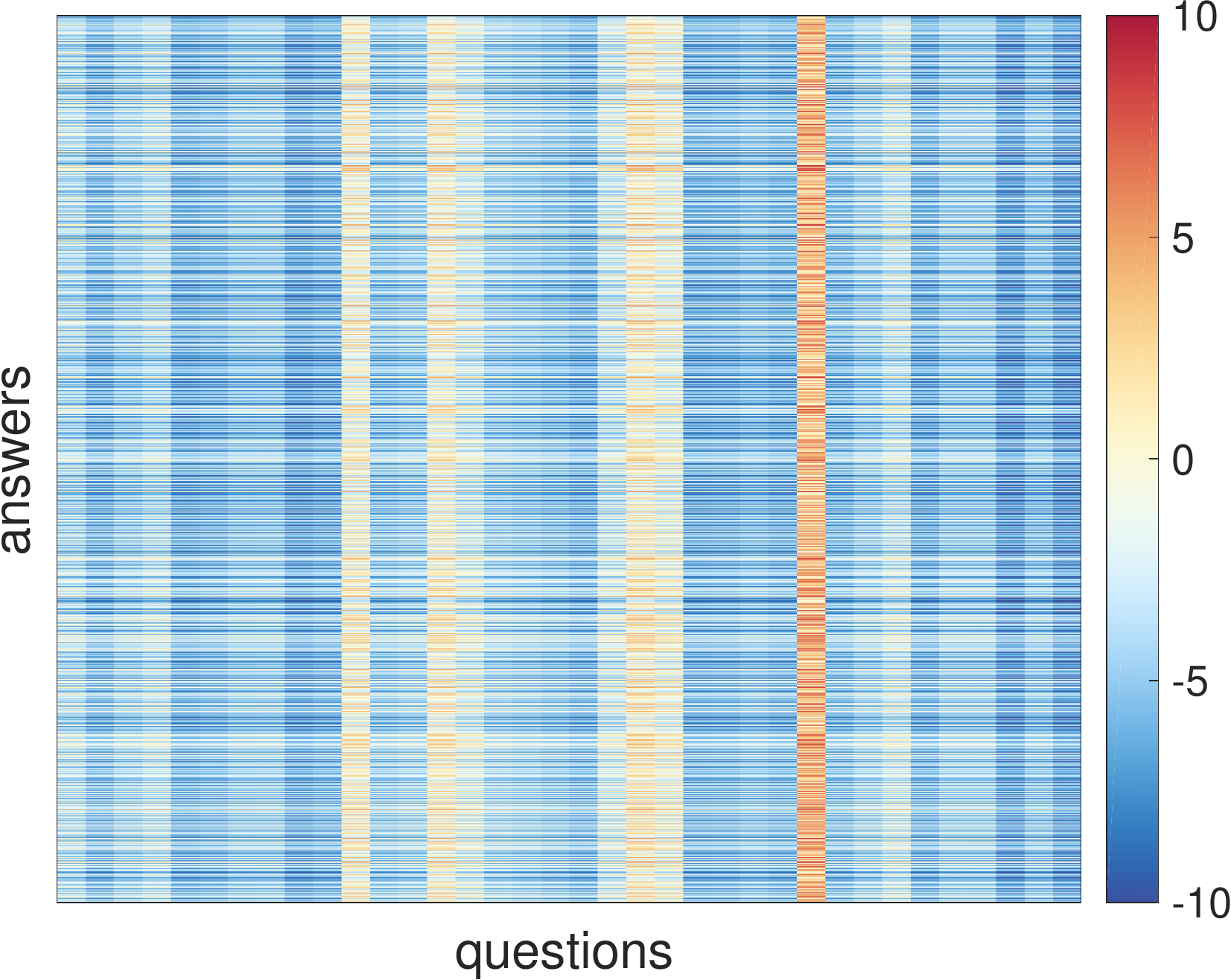}%
  }\qquad
  \subfloat[\HPI]{%
    \label{fig:worldclim_param:ro}%
    \includegraphics[width=\subfigwidth]{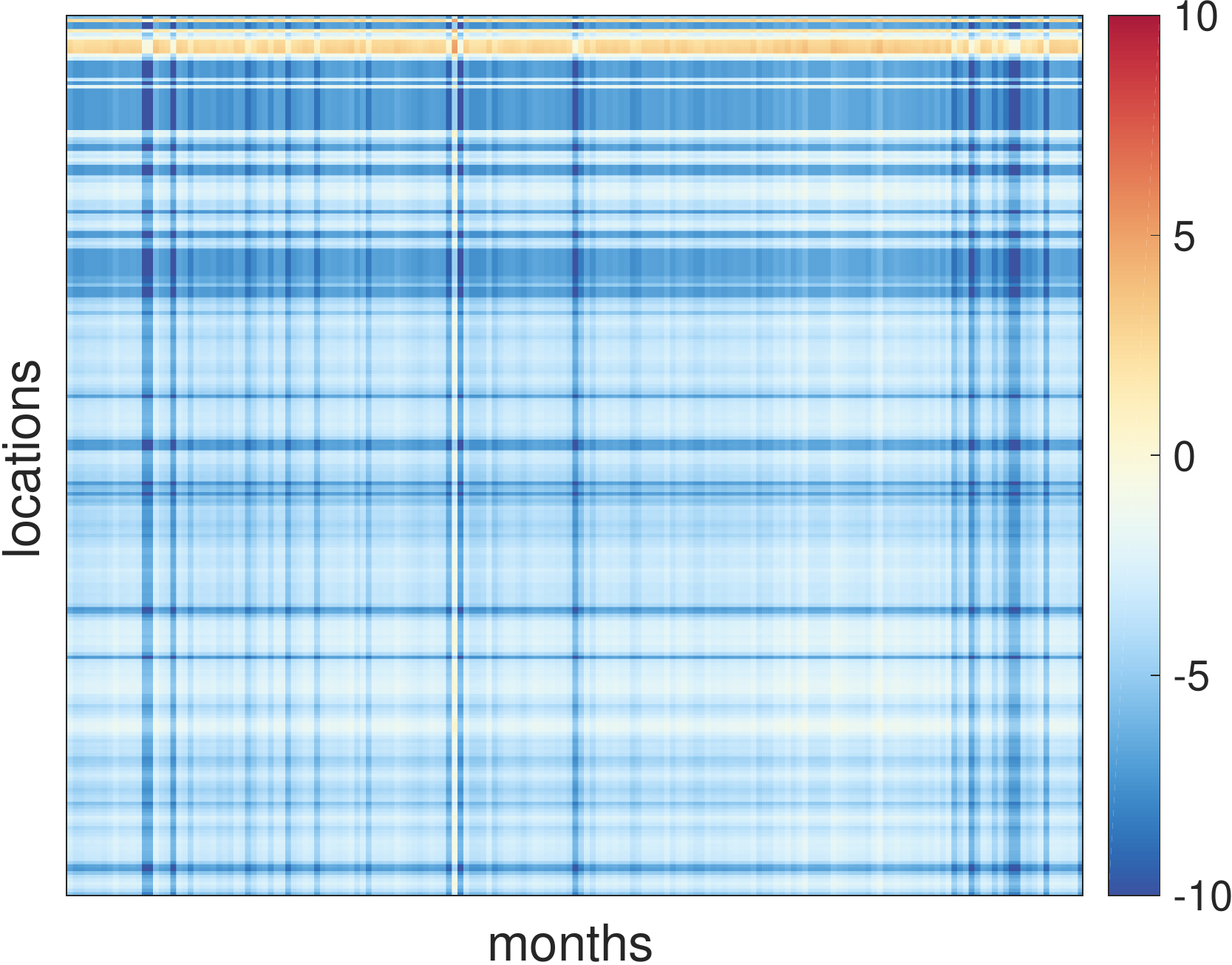}%
  } \qquad
  \subfloat[\News]{%
    \label{fig:worldclim_param:alpha}
    \includegraphics[width=\subfigwidth]{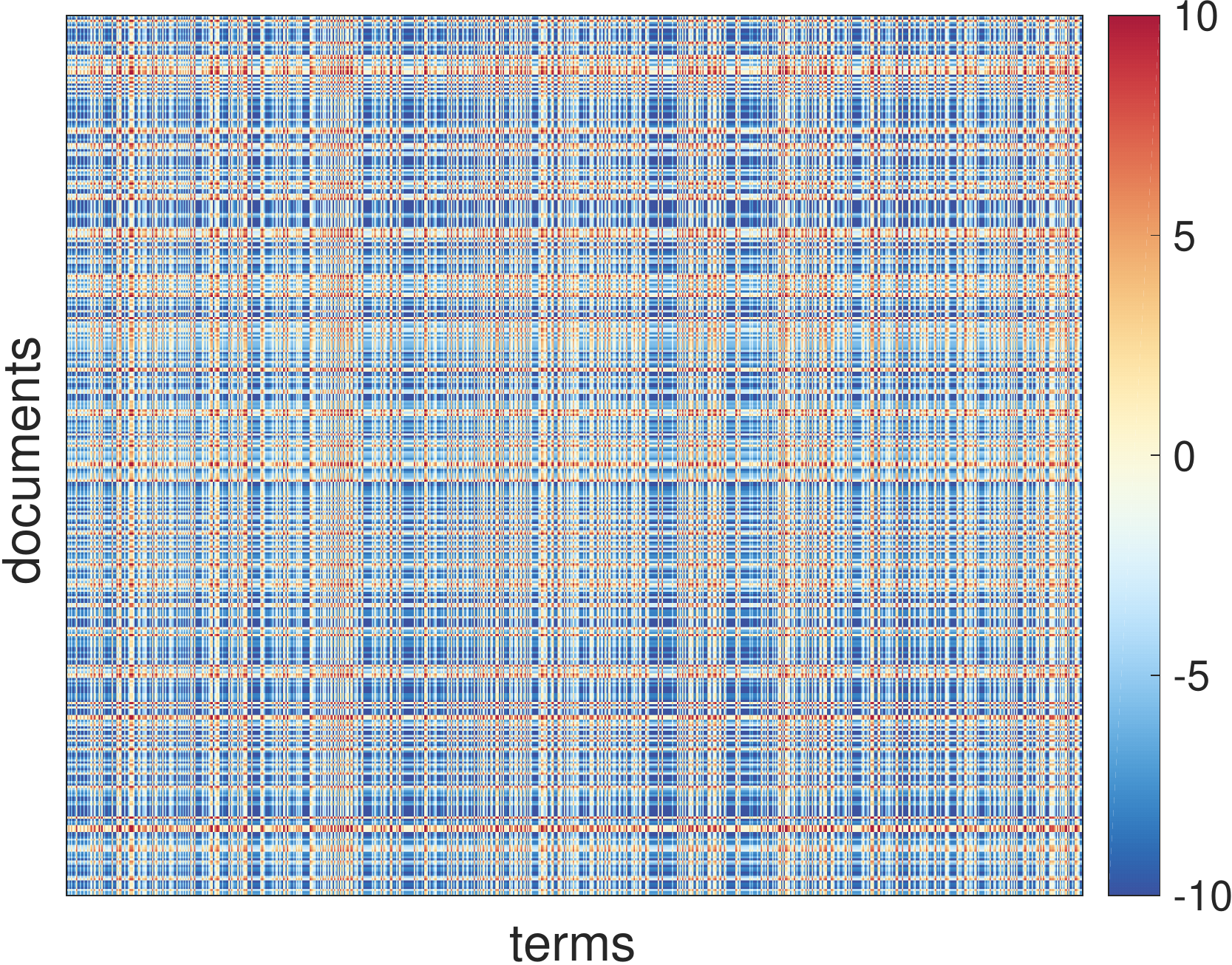}%
  }
  \caption{Visualizations for the parameter matrices with different real-world data sets.}
  \label{fig:param_matrices}
\end{figure*}

\section{Runtime on Real-World Datasets.}
\label{app:runtime}

Table~\ref{tab:real:world:time} shows the execution time of \Latitude and the benchmark algorithms for all datasets used in this paper. Although it is evident that both \SVD and \NMF are much faster, it is worth noting that \Latitude is an iterative algorithm, and its objective improvements tend to become smaller over time. In many cases it produces reasonably high quality results after only a few iterations, which can be used to save execution time. Figure~\ref{fig:error:time} demonstrates the convergence rate of \Latitude.

 \setlength{\tabcolsep}{0.5em}
 \begin{table}[tb] 
   \centering
   \caption{Runtime in seconds for real-world datasets.}
\label{tab:real:world:time}
   \begin{tabular}{@{}lRRRRR@{}}
     \toprule
                      & \text{\Worldclim} & \text{\NPAS} & \text{\Eigenfaces} & \text{\News} & \text{\HPI}  \\
     $k=$         & 10      & 10      & 40      & 20      & 15 \\
     \midrule
     \Latitude  & 60.59 & 30.58 & 148.40 & 52.28 & 10.90     \\
     \Lattrunc  & 57.67 & 28.98 & 143.89 & 49.20 & 11.58     \\
     \SVD        & 1.43 & 0.25 & 0.15 & 0.15 & 0.05     \\
     \NMF       & 1.11 & 0.45 & 2.49 & 1.62 & 0.13     \\
     \Cancer     & 36574 & 11070 & 48476 & 10445 & 785     \\
     \bottomrule
   \end{tabular}
 \end{table}

 \begin{figure*}[t]
   \captionsetup[subfigure]{labelformat=empty}
  \centering
  \subfloat[]{%
    \includegraphics[width=\subfigwidth]{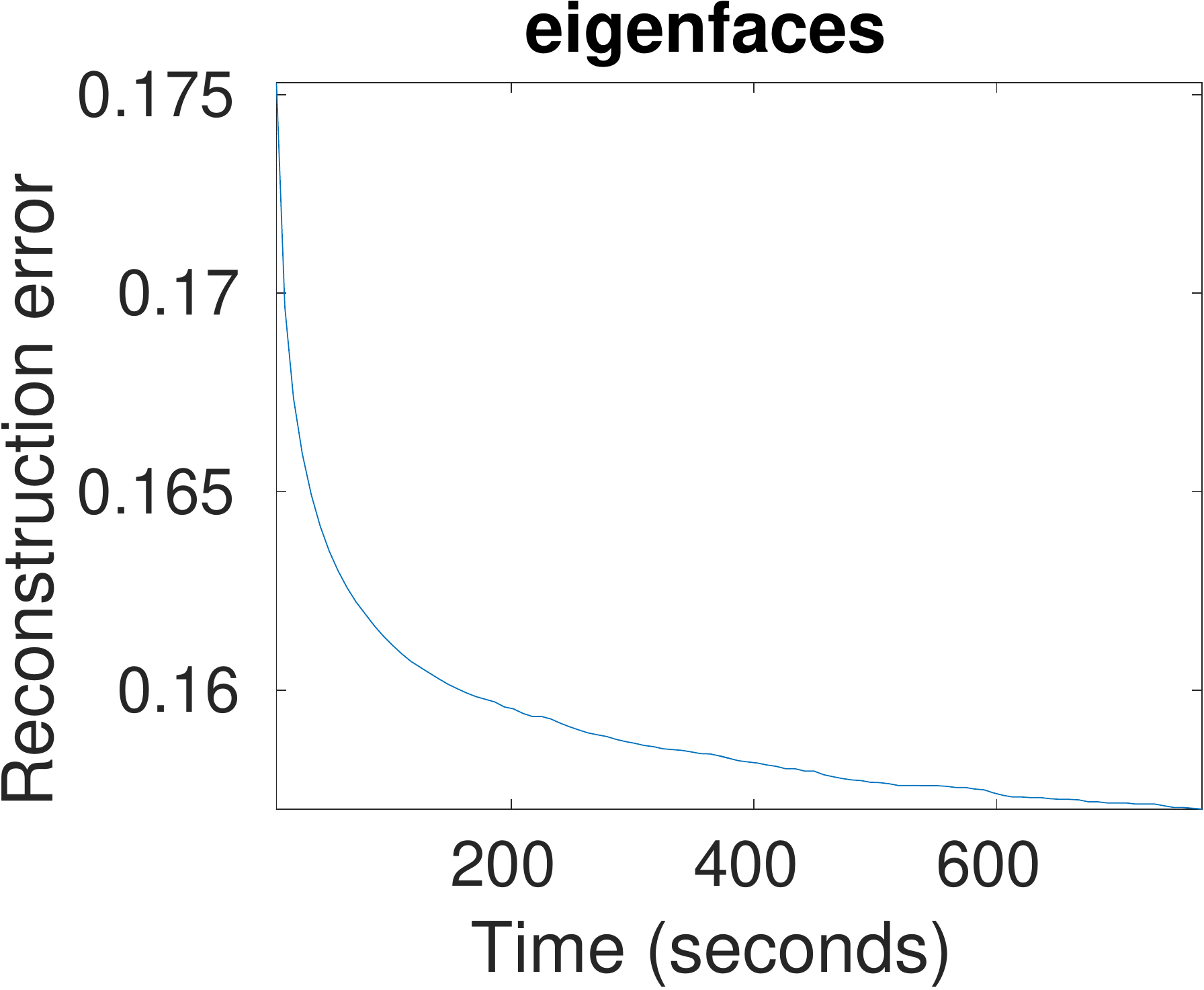}
  }
  \hspace{\subfigspace}
  \subfloat[]{%
    \includegraphics[width=\subfigwidth]{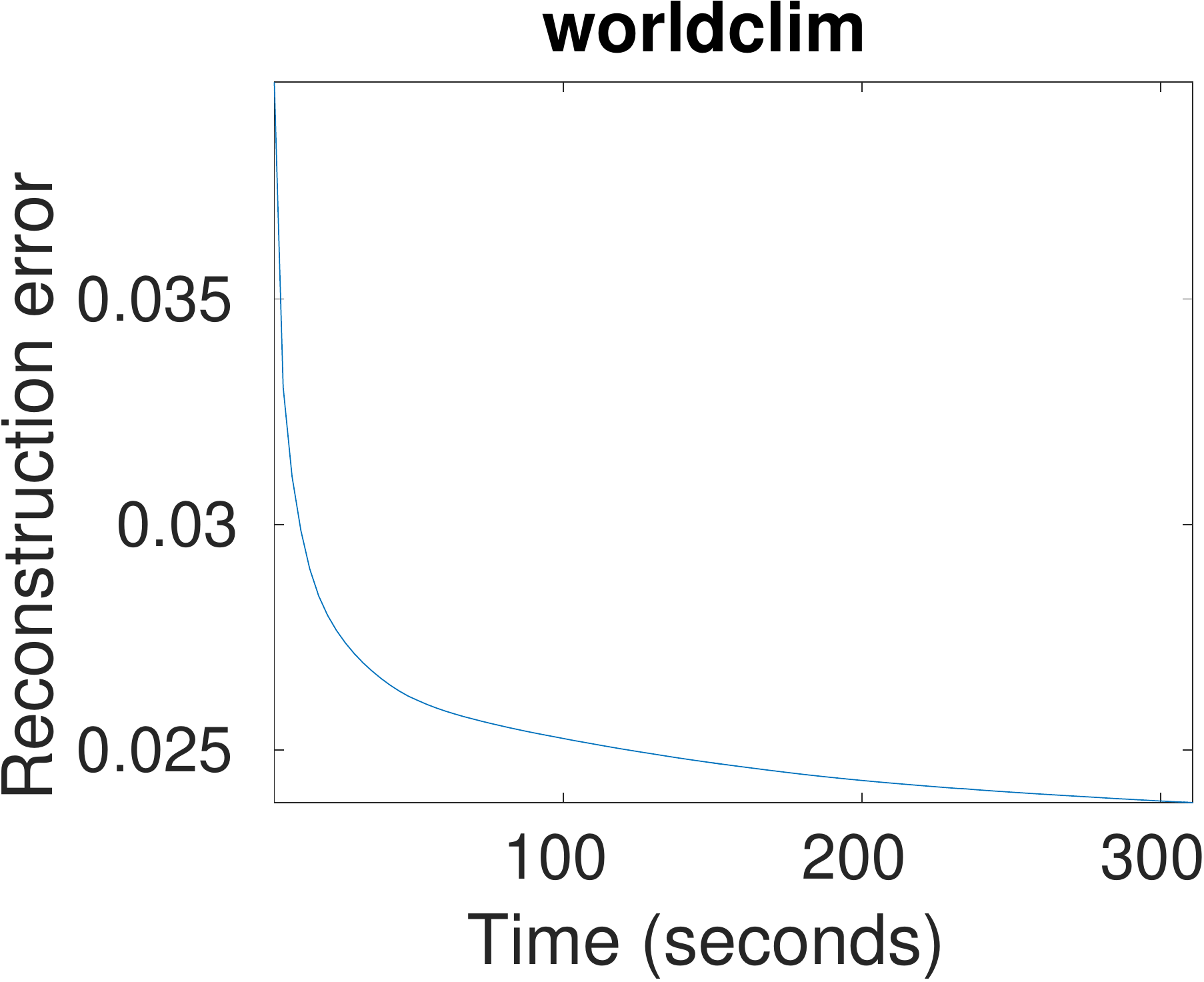}
  }
  \hspace{\subfigspace}
  \subfloat[]{%
    \includegraphics[width=\subfigwidth]{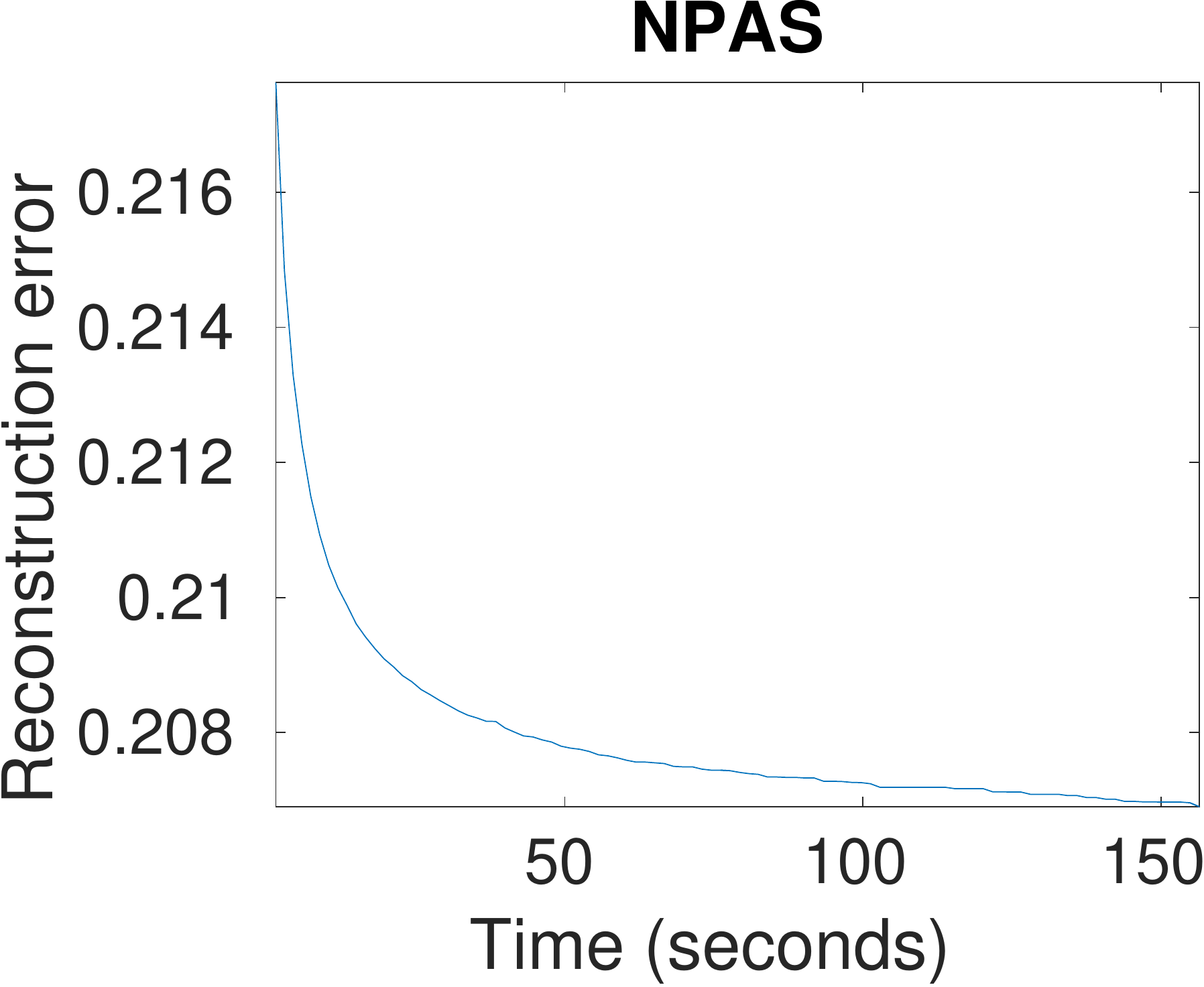}
  }
  \hspace{\subfigspace}
  \subfloat[]{%
    \includegraphics[width=\subfigwidth]{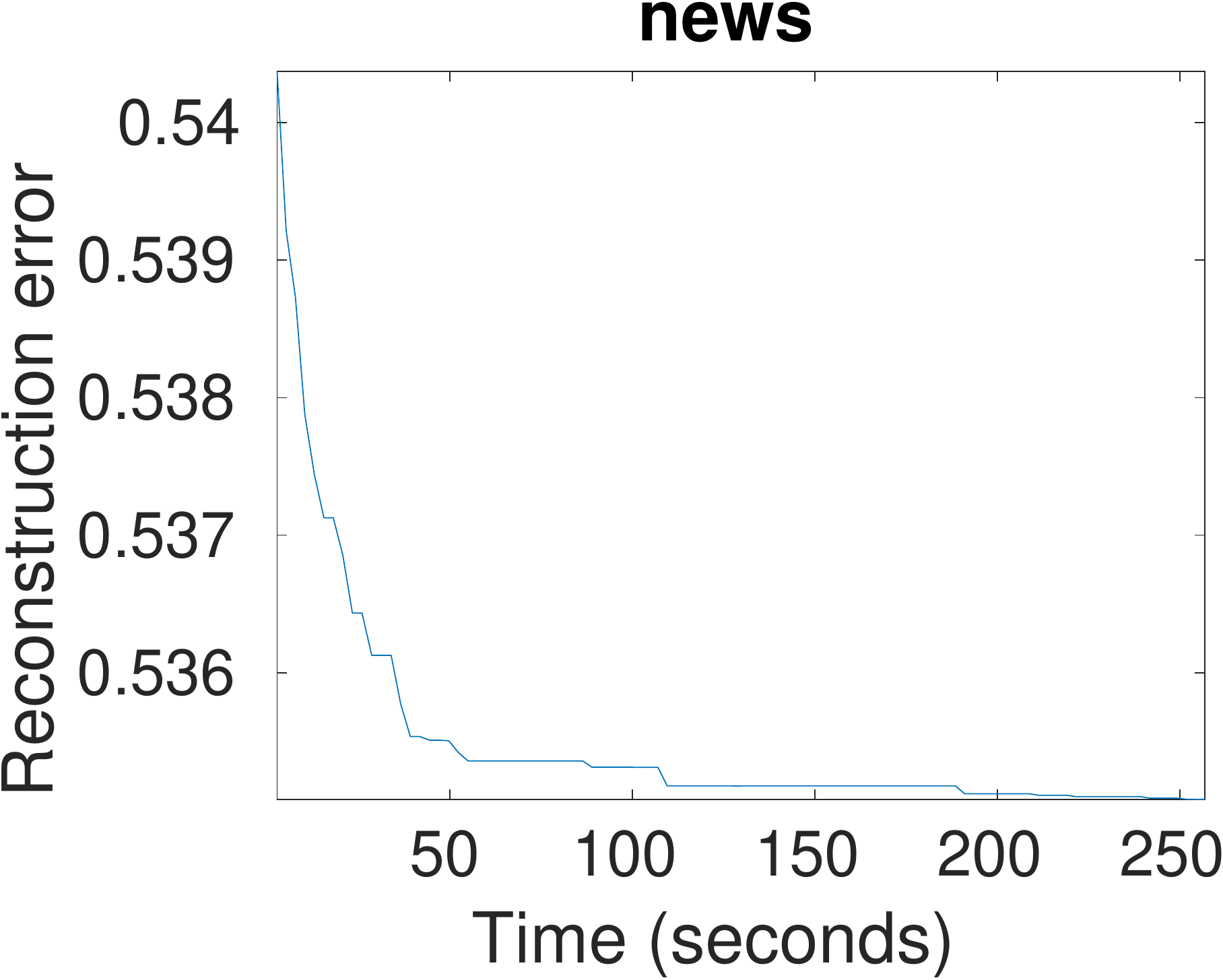}
  }
  \subfloat[]{%
    \includegraphics[width=\subfigwidth]{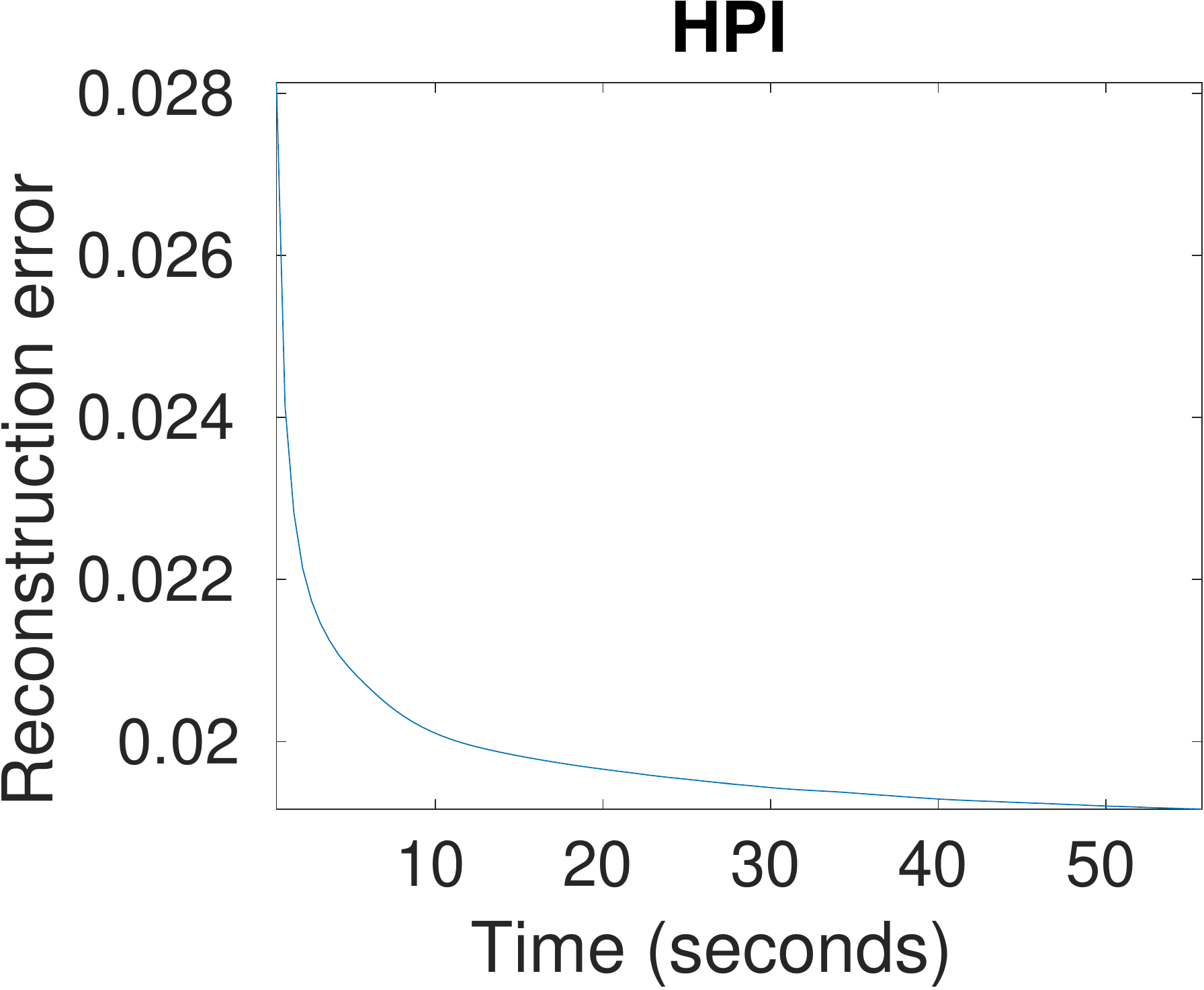}
  }
  \caption{\textbf{Reconstruction error of \Latitude as a function of time for real-world datasets.} \label{fig:error:time}}
\end{figure*}

\end{document}